\newtheorem{globalcounter}{Theorem}[section] 
\newtheorem{theorem}[globalcounter]{Theorem}
\newtheorem{lemma}[globalcounter]{Lemma}
\newtheorem{remark}[globalcounter]{Remark}
\renewcommand{\paragraph}[1]{\noindent\textbf{#1}}
\title{Hierarchical Token Prepending: Enhancing Information Flow in Decoder-based LLM Embeddings}
\author{
  Xueying Ding\textsuperscript{1}\thanks{Equal contribution.}\thanks{Work done as intern in Snap Inc.},
  Xingyue Huang\textsuperscript{2}\footnotemark[1]\footnotemark[2],
  Mingxuan Ju\textsuperscript{3},
  Liam Collins\textsuperscript{3},\\
  \textbf{Yozen Liu}\textsuperscript{\textbf{3}},
  \textbf{Leman Akoglu}\textsuperscript{\textbf{1}},
  \textbf{Neil Shah}\textsuperscript{\textbf{3}},
  \textbf{Tong Zhao}\textsuperscript{\textbf{3}}
  \\[0.6ex]
\textsuperscript{1}Carnegie Mellon University\;
\textsuperscript{2}University of Oxford\;
\textsuperscript{3}Snap Inc.
}
\begin{document}
\maketitle
\begin{abstract}
Large language models produce powerful text embeddings, but their causal attention mechanism restricts the flow of information from later to earlier tokens, degrading representation quality. While recent methods attempt to solve this by prepending a single summary token, they over-compress information, hence harming performance on long documents. We propose \emph{Hierarchical Token Prepending} (HTP)
\footnote{\url{https://github.com/snap-research/HTP}}, 
a method that resolves two critical bottlenecks. To mitigate attention-level compression, HTP partitions the input into blocks and prepends block-level summary tokens to subsequent blocks, creating multiple pathways for backward information flow. To address readout-level over-squashing, we replace last-token pooling with mean-pooling, a choice supported by theoretical analysis. HTP achieves consistent performance gains across 11 retrieval datasets and 30 general embedding benchmarks, especially in long-context settings. As a simple, architecture-agnostic method, HTP enhances both zero-shot and finetuned models, offering a scalable route to superior long-document embeddings.
\end{abstract}

\section{Introduction}

Neural text embeddings play a critical role in applications such as information retrieval, recommendation, document clustering, etc. Notably, large language models (LLMs) have emerged as a powerful paradigm for embedding models, offering remarkable zero-shot capabilities and often surpassing traditional sentence embedding models~\citep{muennighoff2023mteb,thakur2021beir}.

However, LLMs are optimized for auto-regressive token generation, not for producing sequence-level embeddings. Therefore, LLMs typically rely on causal attention instead of the bidirectional encoding used in traditional encoder-only models~\citep{devlin2019bert,liu2019roberta}. As a result, embedding generation suffers from \emph{restricted backward flow}~\citep{echoembedding,fu2024token}, as earlier tokens cannot access later positions in the sequence.
The representations of earlier tokens cannot integrate later information, and hence degrade downstream performances.

Multiple recent works~\citep{echoembedding,fu2024token,xu-etal-2024-reading} have attempted to introduce such backward flow with minimal or no changes to model architectures, thereby preserving zero-shot capabilities. Echo embedding~\citep{echoembedding} addresses the issue at the \emph{token level} by repeating the input sequence, 
allowing early tokens in the second copy can attend to later tokens in the first. While effective, doubling the sequence 
is computationally infeasible for long documents. In contrast, token prepending (TP)~\citep{fu2024token} is
a \emph{document level} solution that dynamically rewire the end-of-sequence (EOS) representation to the beginning of the input with a custom \texttt{<PST>} token before the next layer. This allows all tokens to attend to a compressed summary of the full sequence,
yet degrades performance on long-context tasks due to over-compression. We therefore ask: \emph{Can we preserve backward flow while alleviating information compression and maintaining scalability for long-context tasks?} 

In this work, we investigate the information over-compression effect in long-context LLM embeddings generations with TP and identify two bottlenecks: (1) an \textbf{attention-level} bottleneck, where a single prepended token is forced to summarize the entire document, and (2) a \textbf{readout-level} bottleneck, where the final embedding is taken solely from the last token, resulting in an over-squashed representation~\citep{barbero2024transformers}.

We propose \emph{Hierarchical Token Prepending} (HTP), which mitigates over-compressing by replacing a single global \texttt{<PST>} token with a hierarchy of block-level summary tokens. HTP partitions the input into blocks and assigns each block a designated summary token prepended to subsequent blocks, enabling backward flow through multiple pathways and alleviating the attention-level bottleneck. For the readout-level bottleneck, we replace last-token pooling with mean-pooling and show improved performance in long-context settings.

Our contributions can be summarized as follows:
\begin{itemize}[leftmargin=0.4cm,nosep]
\item We propose \emph{Hierarchical Token Prepending} (HTP), 
a block-level summary tokens prepending method that enables backward flow with less over-compression.
\item We show that \emph{mean-pool} readout is more suitable for long-context retrieval tasks, with empirical and theoretical evidence.
\item HTP showcases consistent improvements over extensive evaluation across \(11\) retrieval datasets and \(30\) general embedding tasks, under standard and long-context settings.
HTP can also improve performance of finetuned embedding models (e.g., \emph{NV-Embed-v2}~\citep{lee2025nvembed}).
\end{itemize}

\section{Related Work}

\paragraph{Sentence Embeddings.}
The foundation of modern text embeddings was built on bidirectional encoders like BERT~\citep{devlin2019bert} and RoBERTa~\citep{liu2019roberta}. While effective for token-level tasks, their raw outputs proved suboptimal for sentence similarity tasks. \
This motivated a wave of research focused on specialized training strategies, including supervised finetuning with SBERT~\citep{reimers-2019-sentence-bert} and Sentence-T5~\citep{ni-etal-2022-sentence}, contrastive learning with SimCSE~\citep{gao2021simcse} and SNCSE~\citep{chanchani-huang-2023-composition}, and prompt-based tuning with PromptBERT~\citep{jiang-etal-2022-promptbert}. Although powerful, these methods require extensive, task-specific training or finetuning, thus limiting their broader applicability.

\paragraph{Finetuning LLMs for Embeddings.}
More recently, a line of work has focused on adapting decoder-only LLMs for embedding tasks through architectural modifications and finetuning. 
LLM2Vec~\citep{behnamghader2024llmvec} and GRIT~\citep{muennighoff2025generative} apply contrastive finetuning to make LLM representations suitable for retrieval. 
Deelm~\citep{li-li-2024-bellm} modify a decoder-only model to be bidirectional, resembling traditional encoders. Similarly, NVEmbed~\citep{lee2025nvembed} and RepLLaMA~\citep{finetunellama} have demonstrated strong performance by specifically training LLMs for retrieval tasks. 
Our proposed method, HTP, is orthogonal to these approaches and can be applied to enhance the performance of these finetuned models, as we show in \Cref{sec:ablation}.

\paragraph{Training-free LLM for Embeddings.}
A key appeal of LLMs is their ability to generate powerful embeddings in a zero-shot, training-free manner. 
Literature proposed strategies such as optimizing prompts to elicit better representations~\citep{prompteol,lei-etal-2024-meta,pretended_cot,cheng2025contrastive} and utilizing expert routers in MoE LLMs~\citep{li2025your}.
However, a core challenge for all training-free methods is the \emph{restricted backward flow} inherent from the causal attention mechanism of autoregressive models.

Two recent methods directly tackle this issue. 
Echo Embedding~\citep{echoembedding} duplicates the input sequence, allowing the second copy to attend to the first, but resulting doubled sequence length. 
In contrast, Token Prepending~\citep{fu2024token} redirects a final summary token to the beginning of the sequence to create a global view. While more efficient, this creates an information bottleneck, leading to over-compression and degraded performance on long documents. 
Our work, HTP, directly addresses this bottleneck by introducing a hierarchy of block-level summary tokens, enabling robust backward information flow without over-compression.

\section{Methodological Preliminaries}

We begin by examining two factors critical to the quality of embeddings from decoder-only LLMs: \textit{(i)} the choice of readout function for aggregating token representations, and \textit{(ii)} the necessity of enabling backward dependencies in the causal attention mechanism.

\subsection{Mean vs.\ Last-token Embedding}
\label{subsec:mean-vs-last}

Two strategies dominate the aggregation of token embeddings are mean-pooling (averaging all token representations) and last-token pooling (using only the final token's representation). While last-token pooling is often favored for its simplicity and alignment with the autoregressive design of LLMs, it creates a significant information bottleneck. Recent work suggests it is sensitive to prompt selection \citep{echoembedding}, and our experiments in \Cref{sec:experiment} show it underperforms in retrieval tasks.

To understand this performance gap, we show that mean-pooling is more robust to the ``over-squashing'' issue by distributing representational importance across all tokens instead of compressing it into a single position. Following sensitivity analysis adapted for decoder-only Transformers \citep{barbero2024transformers, topping2022understanding}, we quantify how the final embedding changes in response to a perturbation in an input token.
Let $\mathbf{v}^{(0)}_i\in\mathbb{R}^d$ be the $i$-th input token embedding, and  $\mathbf{y}_1,\ldots,\mathbf{y}_n\in\mathbb{R}^d$ be the post-normalization representations at the final layer.
We study the Jacobian norms
$
\big\|\tfrac{\partial \mathbf{y}_n}{\partial \mathbf{v}^{(0)}_i}\big\|$ for last-token readout and 
$\big\|\tfrac{\partial \bar{\mathbf{y}}}{\partial \mathbf{v}^{(0)}_i}\big\|$ for mean-token readout\footnote{$\|\cdot\|$ denotes the Euclidean norm on vectors and the induced operator norm on Jacobians.}, where $\bar{\mathbf{y}}:=\tfrac{1}{n}\sum_{j=1}^n\mathbf{y}_j$. 

\begin{theorem}[Mean vs.\ Last-token Embedding]\label{thm:informal-mean-vs-last}
In a causal, decoder-only Transformer with $L$ layers, there exists a depth-dependent constant
$K_L>0$ and a nonnegative, lower-triangular,
row-stochastic mixing matrix $\mathbf{A}\in\mathbb{R}^{n\times n}$ (capturing aggregate
attention+residual flow across layers) such that, for every input position $i\in[n]$,
\vspace{-0.1in}
\begin{align}
\underbrace{\Big\|\tfrac{\partial \mathbf{y}_n}{\partial \mathbf{v}^{(0)}_i}\Big\|}_{\text{last-token}}
&\le\; K_L\,\mathbf{A}_{n,i}, \\
\underbrace{\Big\|\tfrac{\partial \bar{\mathbf{y}}}{\partial \mathbf{v}^{(0)}_i}\Big\|}_{\text{mean-tokens}}
&\le\; \frac{K_L}{n}\sum_{j=1}^n \mathbf{A}_{j,i}.
\end{align}
\vspace{-0.15in}
\end{theorem}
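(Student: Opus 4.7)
The plan is to unroll the chain rule layer by layer, bound the Jacobian of each sublayer, and absorb all non-attention Lipschitz factors into a single depth-dependent constant $K_L$, while tracking the causal softmax attention pattern through the network to produce the claimed row-stochastic mixing matrix $\mathbf{A}$. This is the standard sensitivity recipe from \citet{topping2022understanding,barbero2024transformers}, adapted from the encoder setting to the causal decoder setting.

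First I would fix notation: write each layer as $\mathbf{h}^{(\ell+1)} = \mathrm{FFN}_\ell \circ \mathrm{LN}^{(2)}_\ell \circ (\mathrm{Id} + \mathrm{Attn}_\ell) \circ \mathrm{LN}^{(1)}_\ell$ applied to $\mathbf{h}^{(\ell)}$. For a single layer, I would compute $\partial \mathbf{h}^{(\ell+1)}_j / \partial \mathbf{h}^{(\ell)}_i$. The causal softmax attention yields coefficients $\alpha^{(\ell)}_{j,i}\ge 0$ with $\sum_i \alpha^{(\ell)}_{j,i}=1$ and $\alpha^{(\ell)}_{j,i}=0$ for $i>j$, so the attention Jacobian is lower-triangular and row-stochastic in its scalar mixing part. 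Combining with the residual (which contributes the identity) and normalizing the contributions gives a lower-triangular, row-stochastic matrix $\mathbf{P}^{(\ell)}$ governing information flow at that layer, up to a bounded factor from the value projection, LayerNorm, and FFN with Lipschitz constants $C_V$, $C_N$, $C_F$, respectively (bounded independently of input by standard Jacobian arguments on softmax, LN, and GeLU/ReLU).

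Next I would apply the chain rule across all $L$ layers: the Jacobian from $\mathbf{v}^{(0)}_i$ to $\mathbf{y}_j$ factors as a product of attention-mixing matrices and per-layer Lipschitz blocks, which I would bound in operator norm by $K_L := (C_V C_N C_F)^L$ (up to absolute constants from the residual bookkeeping) times the $(j,i)$ entry of the product $\mathbf{A} := \mathbf{P}^{(1)}\cdots \mathbf{P}^{(L)}$. Since the set of lower-triangular, nonnegative, row-stochastic matrices is closed under multiplication, $\mathbf{A}$ inherits these properties. This yields the pointwise bound $\|\partial \mathbf{y}_j / \partial \mathbf{v}^{(0)}_i\| \le K_L \mathbf{A}_{j,i}$, which immediately gives the last-token inequality by setting $j=n$. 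For the mean readout, linearity of differentiation and the triangle inequality yield
\begin{equation*}
\Bigl\|\tfrac{\partial \bar{\mathbf{y}}}{\partial \mathbf{v}^{(0)}_i}\Bigr\|
= \Bigl\|\tfrac{1}{n}\sum_{j=1}^n \tfrac{\partial \mathbf{y}_j}{\partial \mathbf{v}^{(0)}_i}\Bigr\|
\le \tfrac{K_L}{n}\sum_{j=1}^n \mathbf{A}_{j,i},
\end{equation*}
which is the second inequality.

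The main obstacle I anticipate is making the row-stochastic property of $\mathbf{A}$ genuinely precise, because the softmax attention, the residual addition, and the layer normalization each require slightly different treatment. In particular, the residual branch carries identity weights while the attention branch carries convex-combination weights, so their sum is only convex after an appropriate rescaling (either explicit $\tfrac{1}{2}$ averaging, or by allowing $\mathbf{A}$ to be a normalized aggregate and absorbing the resulting scalar into $K_L$). Carrying this bookkeeping cleanly through $L$ layers, so that $\mathbf{A}$ is simultaneously lower-triangular and row-stochastic (rather than merely substochastic), is the technically delicate step; the rest reduces to routine Lipschitz estimates of the constituent nonlinearities.
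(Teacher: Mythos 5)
Your proposal is correct and follows essentially the same route as the paper: a layerwise Jacobian bound that folds the residual identity into the attention weights, normalization of the resulting lower-triangular matrices into row-stochastic form (absorbing the row-sum $1+1/\beta_1^{(\ell)}$ into $K_L$, exactly the ``delicate step'' you flag), a path-sum/matrix-product identity giving $\mathbf{A}$, and linearity plus the triangle inequality for the mean readout. The only cosmetic differences are notational (the paper writes $\mathbf{A}=\mathbf{M}^{(L-1)}\cdots\mathbf{M}^{(0)}$ in chain-rule order and uses $\beta_1,\beta_2,\beta_3,\sigma_\psi$ in place of your $C_V,C_N,C_F$).
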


\paragraph{Interpretation.} The last-token bound depends on a \emph{single} entry \(\mathbf{A}_{n,i}\), representing the influence of input token i on the final token $n$, which can diminish rapidly with network depth. 
In contrast, the mean-pooling bound aggregates the \emph{entire column} \(\sum_j \mathbf{A}_{j,i}\) and is therefore topologically depth-agnostic up to the scale factor \(K_L\), implying more robustness against over-squashing.
We present the formal definitions 
and proof in \cref{sec:proofs}.

\paragraph{Long-context Sentence Similarity (STS) Task.} 
We further empirically validate the impact of over-squashing on a modified STS task, where create long-context inputs by concatenating sentences with similar human-annotated scores (details in \cref{sec:experiment_details}). As shown in \Cref{fig:longsts}, the performance of last-token embeddings degrades sharply as input length increases. While methods like PromptEOL~\citep{prompteol} offer some mitigation, they still cannot match the stability and superior performance of mean-pooling.

\begin{figure}[h!]
    \centering
    \includegraphics[width=\linewidth]{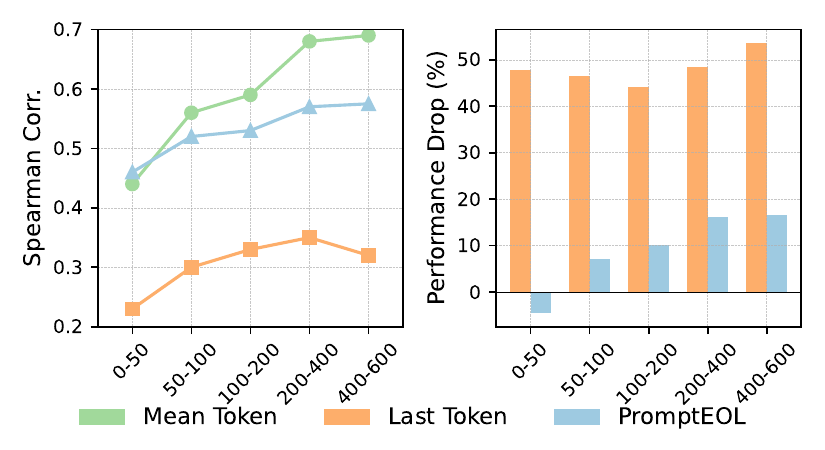} 
    \vspace{-0.3in}
    \caption{Left: Performance by embedding method across sentence lengths. Right: Performance drop of last-token and PromptEOL vs. mean at longer lengths.}
    \label{fig:longsts}
    \vspace{-0.2in}
\end{figure}

\subsection{Backward Dependency}

The causal attention in decoder-only LLMs restricts backward information flow, as tokens cannot attend to subsequent positions. Input repetition methods~\citep{echoembedding,xu-etal-2024-reading} address this by allowing a ``second pass'' over the input, where the repeated sequence can attend to the original to form a more complete representation. \Cref{fig:combined_figure} showcases a masking experiment to isolate the effect of this backward dependency. We find that preventing the second-pass tokens from attending to the first-pass tokens significantly degrades STS performance, whereas masking the forward attention (from the first pass to the second) has a negligible impact. This result confirms that enabling backward information flow is critical for generating high-quality embeddings.

\begin{figure}[h!]
    \centering

    \begin{subfigure}[b]{0.45\linewidth}
        \centering
        \includegraphics[width=\linewidth]{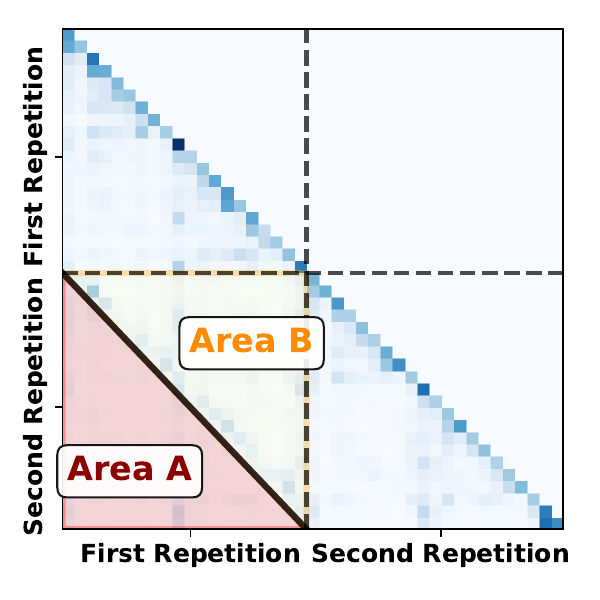}
        \vspace{-0.25in}
        \caption{Attention Map.}
        \label{fig:echo_atten}
    \end{subfigure}
    \hfill
    \begin{subfigure}[b]{0.5\linewidth}
    \scalebox{0.75}{
        \centering
        \vspace{-0.2in}
        \begin{tabular}{l c}
            \toprule
            \textbf{Condition} & \textbf{STS Score} \\
            \midrule
            Echo Unmasked & \textbf{68.00} \\
             + Mask Area A & 67.89 \\
            + Mask Area B & 54.25 \\
            \bottomrule
        \end{tabular}
        }
        \caption{Average scores across 4 STS tasks on Echo Mean Embeddings, with no-masking applied, masking Area A and masking Area B.}
        \label{tab:sts_scores} 
    \end{subfigure}
    \vspace{-0.1in}
    \caption{Disabling second-pass backward attention significantly degrades STS performance.}
    \label{fig:combined_figure}
    \vspace{-0.2in}
\end{figure}

\begin{figure}[h!]
    \centering
    \includegraphics[width=0.9\linewidth]{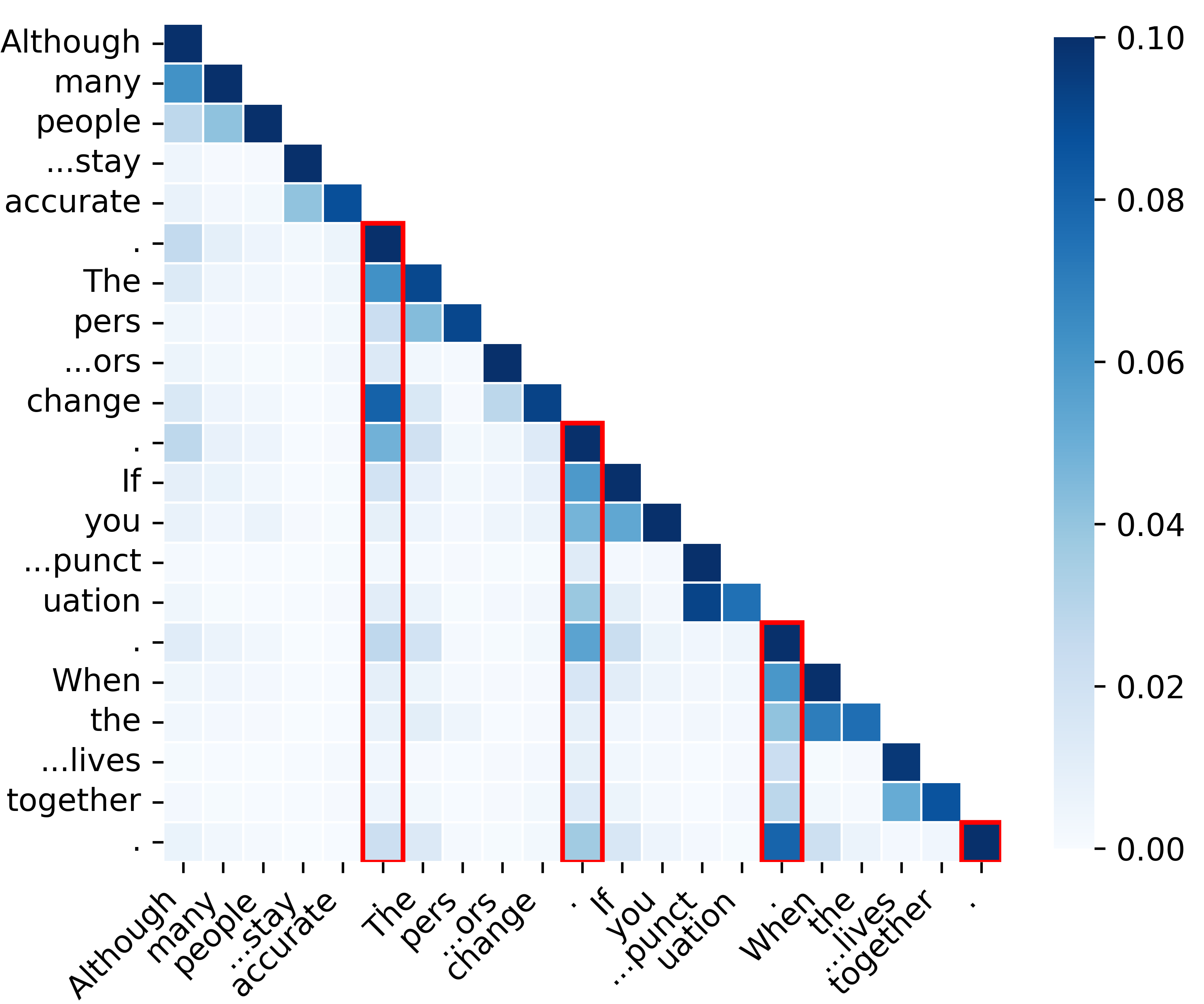} 
    \vspace{-0.1in}
    \caption{The attention map shows that End-of-Sentence (EOS) tokens are capturing information with more attention lookup to previous tokens.}
    \vspace{-0.2in}
\label{fig:sent_atten}
\end{figure}

\begin{figure*}[!ht]
    \centering
    \vspace{-0.3in}
    \includegraphics[width=0.95\linewidth]{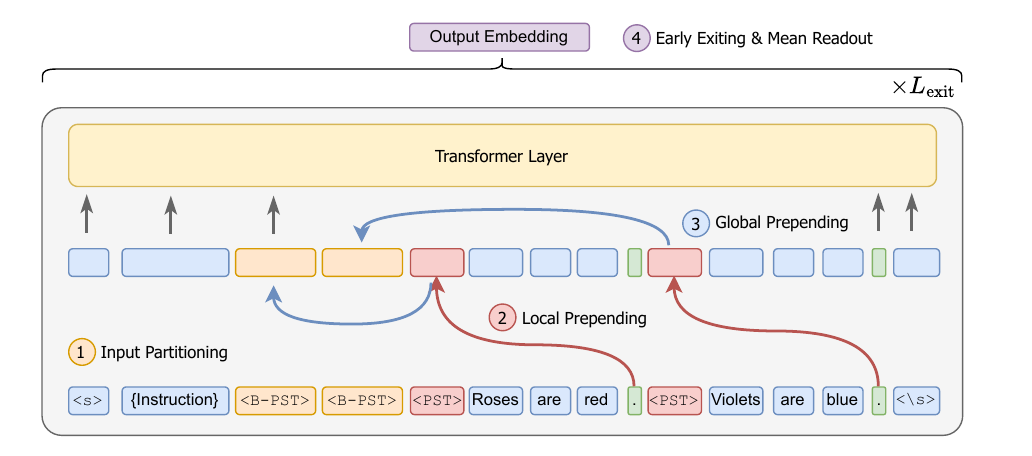} 
    \vspace{-0.2in}
    \caption{
    HTP partitions the input into blocks and creates a two-level summary. First, the hidden state of each block's final token is copied to a local summary token (\texttt{<PST>}). These local summaries are then aggregated into a global (\texttt{<B-PST>}) block at the front, making them accessible to all tokens. A mean readout with early exiting produces the final embedding.}
    \label{fig:sample-image}
    \vspace{-0.1in}
\end{figure*}

\section{Hierarchical Token Prepending}
\label{sec:method}

To mitigate the attention and readout bottlenecks in decoder-only models, we propose \emph{Hierarchical Token Prepending} (HTP), a training-free method that establishes multi-level backward information flow. As illustrated in \Cref{fig:sample-image}, HTP operates in three main stages:

\begin{enumerate}[leftmargin=*,nosep]
    \item \textbf{Input Partitioning:} 
    The input text is segmented into semantic blocks (e.g., sentences), and placeholder summary \texttt{<PST>} tokens are inserted to create a hierarchical structure.
    \item \textbf{Local Prepending:} 
    Between Transformer layers, the final hidden state of each block is ``rewired'' to a corresponding local summary token, creating a block-level summary.
    \item \textbf{Global Prepending:} 
    These local summaries are then propagated to a global summary \texttt{<B-PST>} block at the beginning of the sequence, making them accessible to all tokens.
\end{enumerate}

After the final layer, a mean-pooling readout is used to produce the output embedding.
The following subsections elaborate over each step.

\subsection{Input Partitioning}
\label{sec:input_partition}

HTP begins by restructuring the input sequence to accommodate hierarchical summaries. Unlike methods that use a single summary token for entire text, HTP partitions the text into smaller semantic units to prevent information over-compression.

Given an input sequence of tokens $T = [t_1,\cdots,t_n]$, we first segment it into $M$ subsequences $(S_1,\cdots,S_M)$, $S_m$ with $m \in [M]$ represents a local context. This partition is defined by indices $0 = i_0 < i_1 < \cdots < i_M = n$, s.t. $S_m = (t_{i_{m-1}+1}, t_{i_{m-1}+2}, \dots, t_{i_m})$ for $m \in [M]$.

We then augment the partitioned sequence with two types of special placeholder tokens to create the input sequence $T'$:
\begin{enumerate}[leftmargin=*,nosep]
    \item A local summary token, $\texttt{<PST>}_m$, is inserted before each subsequence $S_m$.
    \item A block of $M$ global summary tokens, $(\texttt{<B-PST>}_1, \dots, \texttt{<B-PST>}_M)$, is prepended to the beginning of the entire sequence.
\end{enumerate}

Hence, the resulted augmented structure of the input sequence $T'$ is:
\vspace{-0.1in}
\begin{align}
T' = [&\texttt{<B-PST>}_1, \dots, \texttt{<B-PST>}_M, \\
&\texttt{<PST>}_1, S_1,  \dots, \texttt{<PST>}_M, S_M] \nonumber
\vspace{-0.1in}
\end{align}

In practice, we define each subsequence $S_m$ by grouping every $K$ sentences, leveraging their natural semantic boundaries. This choice is motivated by observed attention patterns in decoder-only models, where the EOS tokens effectively aggregate information from preceding words (\Cref{fig:sent_atten}). Thus, the final token's hidden state serves as an effective proxy for a sentence's summary. Unless otherwise specified, we set the hyperparameter $K=1$. An example is provided in \Cref{app:examples}.

\subsection{Local Prepending}
\label{sec:local_prepending}

After partitioning, the Local Prepending step populates the \texttt{<PST>} placeholders with sentence-level summaries. This operation is performed dynamically between the layers of the Transformer.

Let $\mathbf{v}^{(\ell)} = (\mathbf{v}_1^{(\ell)}, \dots, \mathbf{v}_{|T'|}^{(\ell)})$ be the sequence of hidden states entering layer $\ell$. Since the placeholder tokens are not in the model's trained vocabulary, their embeddings are randomly initialized for the first layer. For each subsequence layer $\ell > 1$, we apply a ``rewiring'' function, $f_{\text{local}}$, before the self-attention mechanism. This function copies the hidden state of the final token of each subsequence $S_m$ to the position of its corresponding local summary token, $\texttt{<PST>}_m$.

Formally, let $\texttt{pos}(\cdot)$ be a function returning a token's index, and $\texttt{end}(S_m)$ be the final token of subsequence $S_m$, which is typically sentence-ending punctuation (e.g., `\texttt{.}', `\texttt{!}', `\texttt{?}'). Let $\mathcal{P}_{\text{PST}} = \{ \text{pos}(\texttt{<PST>}_m) \}_{m=1}^M$ be the set of all local summary token positions, and the mapping $\mu: \mathcal{P}_{\text{PST}} \to [M]$ returns the sentence index $m$ for a given token position $i \in \mathcal{P}_{\text{PST}}$. The rewiring function $f_\text{local}$ for a hidden state $\mathbf{v}_i^{(\ell)}$ is defined as:
\vspace{-0.1in}
\begin{equation*}
(\,f_{\text{local}}(\mathbf{v}^{(\ell)}))_i =
\begin{cases}
    \mathbf{v}^{(\ell)}_{\text{pos}(\text{end}(S_{\mu(i)}))} & \text{if } i \in \mathcal{P}_{\text{PST}}, \\
    \mathbf{v}_i^{(\ell)} & \text{otherwise.}
\end{cases}
\vspace{-0.1in}
\end{equation*}

This rewired sequence, $\mathbf{v}'^{(\ell)} = f_{\text{local}}(\mathbf{v}^{(\ell)})$, is then fed into the attention block of layer $\ell$.
This process ensures that each $\texttt{<PST>}_m$ token carries a summary of subsequence $S_m$, establishing a \emph{sentence-level} backward dependency.

\subsection{Global Prepending}
\label{sec:global_prepending}

Building upon the local summaries, the Global Prepending step enables \emph{document-level} backward flow. In this stage, we propagate the sentence-level summaries from the \texttt{<PST>} tokens into the global \texttt{<B-PST>} block at the start of the sequence. This hierarchical design creates multiple backward information pathways, mitigating the bottleneck of a single summary token.

We define a second rewiring function, $f_{\text{global}}$, which operates on the output of the local prepending step, $\mathbf{v}'^{(\ell)}$.
$f_{\text{global}}$ copies the hidden state from each local summary token $\texttt{<PST>}_m$ to its corresponding global summary token $\texttt{<B-PST>}_m$.
Let $\mathcal{P}_{\text{BPST}} = \{ \text{pos}(\texttt{<B-PST>}_m) \}_{m=1}^M$ denote the set for the global token positions, and mapping $\nu(i)$ returns the sentence index $m$ for a position $i \in \mathcal{P}_{\text{BPST}}$. 
The $f_{\text{global}}$ function is then defined as:
\vspace{-0.1in}
\begin{equation*}
(\,f_{\text{global}}(\mathbf{v}'^{(\ell)}))_i =
\begin{cases}
    \mathbf{v}'^{(\ell)}_{\text{pos}(\texttt{<PST>}_{\nu(i)})} & \text{if } i \in \mathcal{P}_{\text{BPST}}, \\
    {\mathbf{v}'}_i^{(\ell)} & \text{otherwise.}
\end{cases}
\vspace{-0.1in}
\end{equation*}
The final sequence, $\mathbf{v}''^{(\ell)} = f_{\text{global}}(f_{\text{local}}(\mathbf{v}^{(\ell)}))$, is then fed into the attention block of layer $\ell$. 
This process allows any token to attend to summaries of all subsequent sentences by accessing the \texttt{<B-PST>} block, thereby enabling a comprehensive \emph{document-level} backward flow.

\subsection{Early Exit \& Readout}

Consistent with recent findings that intermediate layers often produce richer semantic representations~\citep{fu2024token, skean2025layer, liu-etal-2024-fantastic, jin-etal-2025-exploring}, we employ an early-exit strategy. We select the output from a predetermined intermediate layer, $L'$. The final document embedding, $\mathbf{\bar y}^{(L')}$, is then computed by applying the mean-pooling readout, as justified in \Cref{subsec:mean-vs-last}, over the fully rewired hidden states $\mathbf{v}''^{(L')}$.

\begin{table*}[ht]
\setlength{\tabcolsep}{4pt}
\centering
\resizebox{0.98\textwidth}{!}{
\begin{tabular}{llccccccccccc}
\toprule
\textbf{Models} & \textbf{Method} & \textbf{ArguAna}	& \textbf{SciFact} & \textbf{FiQA2018} &	\textbf{NFCorpus}& \textbf{SCIDOCS} &	\textbf{HotpotQA} & \textbf{Trec-COVID} & \textbf{Memory} & \textbf{Time} \\
\midrule
\multirow{6}{*}{Mistral-7B} & 
Vanilla Mean & \underline{45.66} &	\underline{42.10} &	8.02 &	9.03& 2.80 & 8.80 & 	26.02 & 1.00$\times$ & 1.00$\times$ \\
& Vanilla Last &  10.66 & 0.35 & 0.98&	2.64 &	0.27 &	0.21 &	1.96 & 1.00$\times$ & 1.00$\times$ \\
& Echo Mean & 35.99 &	28.93 &	\underline{11.60} &	\underline{13.07} & \underline{4.67} &	\underline{14.41} &	\textbf{39.13} & 4.00$\times$ & 3.45$\times$\\
& TP w. PromptEOL & 4.43 &	8.38 &	8.36 &	7.07 &	3.07 &	3.24 &	20.35 & 1.02$\times$ & 1.04$\times$  \\
& TP w. Mean & 42.41 &	36.71 &	\textbf{12.72} &	8.13 &	3.30 & 9.84 & 25.67 & 1.02$\times$ & 1.15$\times$ \\
& HTP (Ours) &  \textbf{47.06} &	\textbf{46.67} & 8.77  &		\textbf{15.51} &	\textbf{6.08} & \textbf{16.02} & \underline{33.65} &  1.12$\times$ & 1.18$\times$  \\
\midrule
\multirow{6}{*}{Gemma2-9B} 
  & Vanilla Mean         & \underline{42.70} & \underline{49.13} &  6.27 & 12.97 &  3.80 &   13.36 & 10.69    &    1.00$\times$ & 1.00$\times$   \\
& Vanilla Last          & 11.89 & 17.55 &  0.52 &  5.63 &  0.70 & 0.11 &3.59  &   1.00$\times$ & 1.00$\times$  \\
& Echo Mean             & 32.56 & 40.16 & 10.50 & 14.99 &  4.49 & \underline{20.28} & 19.42 & 4.00$\times$ & 3.16$\times$ \\
& TP w.\ PromptEOL      & 36.91 & 43.33 & \textbf{18.45} & \underline{17.66} & \underline{15.70} & \textbf{26.17} & \textbf{33.65} & 1.05$\times$ & 1.18$\times$     \\
& TP w.\ Mean           & 42.14 & 52.73 &  8.89 & 15.02 & \textbf{16.50} & 14.94 &21.22 &  1.06$\times$ & 1.16$\times$      \\
& HTP (Ours)      & \textbf{43.64} & \textbf{53.72} & \underline{10.76} &  \textbf{18.33}     &  11.16  &  18.84    &    \underline{25.16} &  1.18$\times$ & 1.20$\times$  \\
\midrule
\multirow{6}{*}{Qwen2-1.5B} 
  & Vanilla Mean         & \underline{34.11} &	\underline{27.16} &	3.26 &	3.90& 4.25 &	3.04 &	11.16 & 1.00$\times$ & 1.00$\times$  \\
& Vanilla Last          &  8.80 &	0.01 &	0.20 &	1.74 &	0.13 &	0.02	& 0.92  & 1.00$\times$ & 1.00$\times$  \\
& Echo Mean             & 33.10 & 22.98 &  \textbf{8.08} &  5.11 & \textbf{6.01} & \textbf{5.82} & 15.09 & 4.00$\times$ & 2.85$\times$       \\
& TP w.\ PromptEOL      & 16.98 &	18.54 &	\underline{4.84} &	\textbf{8.03}	& 4.81	& 4.65 &	14.72 & 1.01$\times$ & 1.02$\times$  \\
& TP w.\ Mean           &  21.66 &	18.31 &	2.52 &	\underline{5.89} &	3.91  & 5.23 &	\textbf{21.78}	& 1.01$\times$ & 1.03$\times$    \\
& Hierarchical TP       & \textbf{36.01} &	\textbf{28.29} &	4.06  &	5.43 &	\underline{4.85} & \underline{5.53} &	\underline{18.35}  & 1.10$\times$ & 1.08$\times$  \\
\bottomrule

\end{tabular}}
\caption{NDCG@10 (in percentage) on Retrieval Tasks from MTEB Retrieval Benchmarks. We \textbf{bold} the top one and \underline{underline} the runner-up. We also report the additional memory and running time incurred from Vanilla method.} \label{tab:retrieval_results}
\end{table*}

\section{Experiments}
\label{sec:experiment}
We conduct extensive experiments over HTP to answer following questions:
\begin{enumerate}[nosep]
    \item[\textbf{Q1.}] How does HTP compare with training-free LLM embedding baselines on retrieval task?
    \item[\textbf{Q2.}] Does HTP perform well in general embedding benchmarks?
    \item[\textbf{Q3.}] What is the effect of local prepending scale on retrieval performance? 
    \item[\textbf{Q4.}] Does HTP help with finetuned models?
\end{enumerate}

\subsection{BEIR Retrieval Task}
\label{sec:beir_retrieval}

\paragraph{Setups.} We evaluate on a subset of commonly used BEIR~\citep{thakur2021beir} retrieval datasets (\textbf{Q1}), including ArguAna~\citep{wachsmuth2018retrieval}, SciFact~\citep{wadden2020fact}, FiQA2018~\citep{maia201818}, NFCorpus~\citep{boteva2016full}, SCIDOCS~\citep{cohan-etal-2020-specter}, HotpotQA~\citep{yang-etal-2018-hotpotqa}, and TREC-COVID~\citep{voorhees2021trec}. 
We report NDCG@10, the time, and memory cost for each baseline methods.

\paragraph{Baselines.} We evaluate three decoder-only LLMs of varying sizes: (1) \textit{Mistral-Instruct-7B-0.3}~\citep{jiang2023mistral7b}, (2) \textit{Gemma2-9B}~\citep{gemmateam2024gemma2improvingopen}, and (3) \textit{Qwen2-1.5B-Instruct}~\citep{yang2024qwen2technicalreport}. For each model, we benchmark the following embedding extraction strategies: (1) \textit{Vanilla Mean}: mean pooling over all token embeddings; (2) \textit{Vanilla Last}: using the last token’s embedding; (3) \textit{Echo Mean}~\citep{echoembedding}: duplicating the input and averaging token embeddings from the second pass; (4) \textit{Token Prepending (TP)}~\citep{fu2024token} \textit{w. PromptEOL}~\citep{prompteol}: appending a summarization-in-one-word prompt and using the final token’s embedding; (5) \textit{TP w. Mean}: prepended prompt token with mean pooling; and (6) \textit{HTP}: as detailed in \Cref{sec:method}. For HTP, we use $K=1$ across all datasets. To segment the paragraphs, we use Spacy's parser \cite{honnibal2020spacy}. Except for TP w. PromptEOL, all methods incorporate instructions (details in \cref{sec:experiment_details}). 

For TP methods and HTP, we select TP mixing layers and early exiting layers based on embedding performance of separate validation retrieval datasets from BEIR \cite{thakur2021beir}, following previous practice \cite{fu2024token} (See details about early existing and token prepending layers in \cref{app:llm_arch}). 

\paragraph{Results.} \cref{tab:retrieval_results} presents the overall results. To answer \textbf{Q1}, our HTP method demonstrates comparable or even superior performance to other training-free LLM embedding baselines, frequently ranking first or second across the datasets. In terms of memory and time efficiency, HTP requires significantly less than Echo Mean~\citep{echoembedding}, while achieving similar performance. Compared to Token Prepending (TP)~\citep{fu2024token} using either PromptEOL~\citep{prompteol} or mean pooling, HTP shows slightly better performance. This improvement stems from its design, which involves inserting additional $\texttt{<PST>}_m$ and $\texttt{<B-PST>}_m$ tokens for local and global lookups, leading to reduced information squashing. In terms of models, \emph{Gemma2-9B} achieves stronger performance, likely due to its larger scale. Overall, we observe that using the last token embedding generally results in lower retrieval performance compared to mean pooling, with the exception of \emph{Gemma2-9B}'s TP with PromptEOL. This suggests that PromptEOL~\citep{prompteol} acts as a soft prompting technique that implicitly encourages information reorganization within the representation.

\begin{figure*}[ht]
    \centering
    \begin{minipage}{0.31\textwidth}
        \centering
        \includegraphics[width=0.9\linewidth]{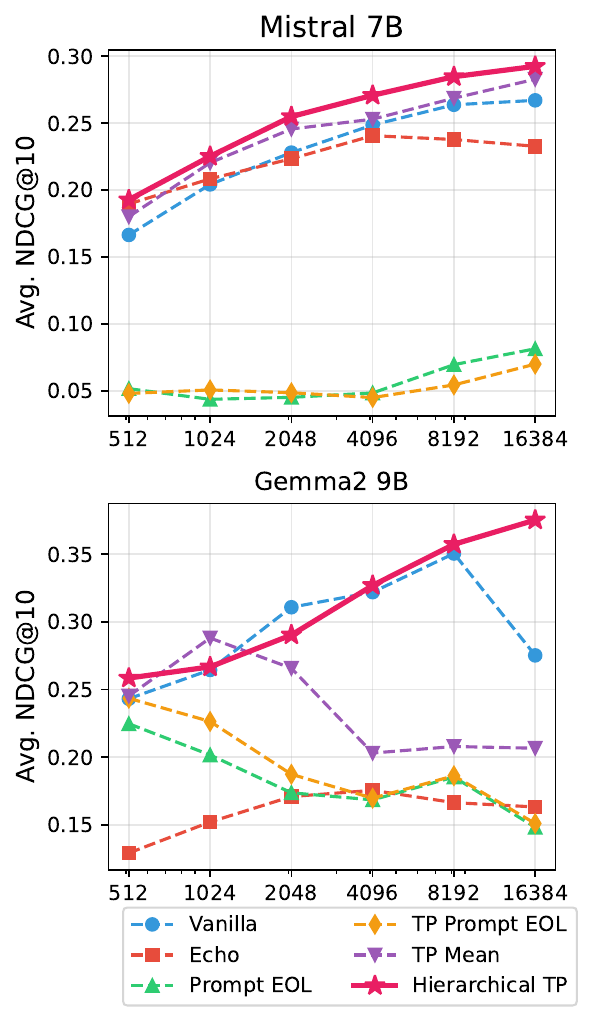} 
        \caption{Avg. NDCG@10 across different context lengths.}
        \label{fig:longembed_plot}
    \end{minipage}%
    \hfill%
    \begin{minipage}{0.66\textwidth}
        \centering
        \resizebox{0.98
        \linewidth}{!}{
        \begin{tabular}{llccccccc}
        \toprule
        \textbf{Models} & \textbf{Method} & \textbf{CXT Len} & \textbf{QMSum} & \textbf{2WikiMQA} & \textbf{SumFD} & \textbf{NQA}  & \textbf{Avg. Time(s)} \\
        \midrule
        \multirow{6}{*}{Mistral-7B} &  
        Vanilla Mean & 512 & 11.98 & 15.29 & 36.81 & 2.50 & 73.0 (\textcolor{gray}{1.00$\times$}) \\
        & Echo Mean & 512 & \textbf{14.27} & \textbf{22.50} & 32.03 &  \underline{7.05} & 135.0 (\textcolor{gray}{1.84$\times$}) \\
        & PromptEOL & 512 & 4.57 & 7.16 & 6.07 & 2.80 & 90.5 (\textcolor{gray}{1.25$\times$}) \\
        & TP w. PromptEOL & 512 & 5.44 & 6.51 & 5.52 & 1.72 & 94.5 (\textcolor{gray}{1.28$\times$})\\
        & TP w. Mean &  512 & 11.97 & 18.62 & \textbf{38.50} & 3.08 & 93.7 (\textcolor{gray}{1.27$\times$}) 	 \\
        & HTP (Ours) & 512 & \underline{13.22} & \underline{20.17} & \underline{35.63} & \textbf{8.09} & 98.2 (\textcolor{gray}{1.34$\times$})  \\
        \midrule
        \multirow{6}{*}{Gemma2-9B} & Vanilla Mean & 512 & 19.19 & \underline{23.12} & \underline{48.54} & 6.45 & 86.5 (\textcolor{gray}{1.00$\times$}) \\
        & Echo Mean & 512 & 9.73 & 16.30 & 21.54 & 4.14 & 170.5 (\textcolor{gray}{2.01$\times$}) \\
        & PromptEOL  & 512 &13.90 & 27.35 & 32.90 & \underline{15.74} & 132.5 (\textcolor{gray}{1.56$\times$})\\
        & TP w. PromptEOL & 512 & 14.43 & \textbf{30.86} & 36.16 & \textbf{15.89} & 118.3 (\textcolor{gray}{1.40$\times$}) \\
        & TP w. Mean  & 512 & \underline{20.45} & 21.73 & \textbf{48.88} & 7.04 & 103.0 (\textcolor{gray}{1.22$\times$}) \\
        & HTP (Ours) & 512 & \textbf{21.89} & 22.90 & 47.78 & 10.84 & 107.7 (\textcolor{gray}{1.27$\times$}) \\
        \midrule
        \multirow{6}{*}{Mistral-7B} &  
        Vanilla Mean & 8192 &24.18& 24.02 & \underline{53.88} & 3.42 & 236.7 (\textcolor{gray}{1.00$\times$}) \\
        & Echo Mean & 8192 & 17.29 & \textbf{29.78} & 39.27 & \underline{8.77} & 818.5 (\textcolor{gray}{3.50$\times$}) \\
        & PromptEOL & 8192 & 6.36 & 7.49 & 9.86 & 4.08 & 245.5 (\textcolor{gray}{1.46$\times$})\\
        & TP w. PromptEOL & 8192 & 4.56 & 6.89 & 6.86 & 3.42 & 256.5 (\textcolor{gray}{1.05$\times$})\\
        & TP w. Mean &  8192 & \underline{23.08} & 23.11 & \textbf{56.50} & 4.78 & 241.0 (\textcolor{gray}{1.02$\times$}) \\
        & HTP (Ours) & 8192 & \textbf{23.43} & \underline{27.88} & 53.53 & \textbf{9.07} & 265.9 (\textcolor{gray}{1.50$\times$}) \\
        \midrule
        \multirow{6}{*}{Gemma2-9B} & Vanilla Mean & 8192 & \underline{29.85} &	\underline{34.62} &	\underline{66.69} &	7.02 & 322.5 (\textcolor{gray}{1.00$\times$}) \\
        & Echo Mean & 8192 & 10.23 &	17.15	& 28.52	& 5.76 & 832.1 (\textcolor{gray}{2.58$\times$})  \\
        & PromptEOL & 8192 & 9.66 &	21.84 &	30.74 &	\textbf{11.91} & 322.7 (\textcolor{gray}{1.00$\times$})\\
        & TP w. PromptEOL & 8192 & 9.71	& 23.32 & 29.80 &	\underline{11.70} & 336.3 (\textcolor{gray}{1.03$\times$}) \\
        & TP w. Mean & 8192 & 18.29 & 19.20 &	29.22 & 7.21 & 	337.5 (\textcolor{gray}{1.03$\times$}) \\
        & HTP (Ours) & 8192  & \textbf{30.22} &	\textbf{35.19} &	\textbf{67.06} &	10.42 & 350.4 (\textcolor{gray}{1.08$\times$})  \\
        \bottomrule
        \end{tabular}
}
        \captionof{table}{NDCG@10 (in percentage) and avg. running time on LongEmbed. we report the context length of 512 and an extended length of 8192. }\label{tab:longembed_results}
    \end{minipage}
\end{figure*}

\subsection{LongEmbed Retrieval Tasks}

\paragraph{Setups.} We evaluate performance of models on four real-world tasks in LongEmbed~\citep{zhu-etal-2024-longembed}, which features documents of longer length and dispersed target information (\textbf{Q1}). The four tasks are QMSum~\citep{zhong2021qmsum}, 2WikiMultiHopQA~\citep{ho-etal-2020-constructing}, SummScreenFD~\citep{Chen2021SummScreenAD}, NarrativeQA~\citep{kovcisky2018narrativeqa}, and a detailed description of the dataset is in \Cref{tab:longembed-stats}. We evaluate on \textit{Gemma2-9B} and \textit{Mistral-instruct-7B-0.3}, and over the same extraction strategies as in the previous section. We again report NDCG@10 and running time for the evaluation. We primarily use two context lengths: 512, a commonly used length in the embedding model literature \citep{echoembedding,lee2025nvembed,behnamghader2024llmvec}, and an extended length of 8192. We also evaluate models with additional context lengths and report their performance for comparison. Across all models, we fix one single instruction prompt: \emph{``Retrieve relevant document. \{text\}''} (except for PromptEOL \cite{prompteol} where the prompt is fixed to \emph{``The paragraph \{text\} means in one word:''}). For our HTP model, we simply use $K=1$ across all dataets.

\paragraph{Results.} \cref{tab:longembed_results} presents results for context lengths of 512 and 8192, while \cref{fig:longembed_plot} shows the average NDCG@10 across four datasets for a wider range of context lengths. At both context lengths, HTP achieves strong performance. It notably outperforms other methods at 8192, while incurring lower runtime than the most expensive Echo Embedding \citep{echoembedding} (\textbf{Q1}). Furthermore, it shows improved performance with longer context lengths, likely due to local token prepending preserving more local information and reducing information oversquashing. \cref{fig:longembed_plot} shows that, across both \emph{Mistral} and \emph{Gemma2} models, the top-performing methods (HTP, Vanilla, TP Mean) rely on mean embedding, supporting our earlier claim that it captures more information and offers greater stability for longer contexts.

\subsection{General Embedding Tasks}

\paragraph{Setups.} Beyond retrieval tasks, we assess the embeddings' quality on a wider range of downstream tasks from the Massive Text Embedding Benchmark (MTEB)~\citep{muennighoff2023mteb}. We evaluate across six task categories: Classification (11 datasets), Reranking (3 datasets), Clustering (11 datasets), 
and Semantic Textual Similarity (STS) (5 datasets), and BEIR Retrieval tasks (7 datasets) (\textbf{Q2}). We report the average performance for each category using \textit{Mistral-Instruct-7B-0.3}, and compare HTP with Echo Mean, PromptEOL, and TP with PromptEOL, which are described in detail in \Cref{sec:beir_retrieval}. For the Echo Mean and HTP methods, we prepend commonly used instruction-style prompts to the texts (except for STS), formatted as \emph{``\{instruct\} \{text\}''}. For the other two methods, we use the same PromptEOL~\citep{prompteol} prompt: \emph{``The sentence means \{text\} in one word.''} Details are provided in  \cref{app:general_embedding_tasks}.

\begin{table}[t]
\footnotesize
\setlength{\tabcolsep}{4pt}
\centering
\scalebox{0.9}{
\begin{tabular}{@{}lcccc@{}}
\toprule
\multirow{2}{*}{\textbf{Task}} & \textbf{Echo} & \textbf{Prompt-} & \textbf{TP w.} & \textbf{HTP} \\
& \textbf{Mean} & \textbf{EOL} & \textbf{ PromptEOL} & \textbf{(Ours)} \\
\midrule
Cls (11 datasets) & 66.12 & 68.12 & \textbf{69.52} & \underline{68.44} \\
Rerank (3 datasets) & \textbf{43.31} & 40.35 & 40.57 & \underline{40.85} \\
Cluster (11 datasets) & \underline{34.07} & 23.10 & 22.11 & \textbf{35.39} \\
STS (5 datasets) & 64.46 & \underline{67.89} & \textbf{68.46} & 53.64 \\
Retrieval (7 datasets) & \underline{21.11} & 8.15 & 7.84 & \textbf{24.82} \\
\bottomrule
\end{tabular}
}
\caption{Average performance on general embedding tasks. We \textbf{bold} the top one and \underline{underline} the runner-up.} \label{tab:nlp_tasks_summary_transposed}
\end{table}

\paragraph{Results.} We present the results in \cref{tab:nlp_tasks_summary_transposed}. To address \textbf{Q2}, we observe that HTP performs well on classification, reranking, and clustering tasks. 
However, it lags behind other methods on sentence similarity tasks. 
We attribute this to the fact that these tasks generally do not involve long sequences,
as STS tasks focus on intra-sentence similarity. Hence, considering PromptEOL \cite{prompteol} excels by summarizing each sentence with a single representative token, it is much more suited for such fine-grained comparisons. Individual task results are in \cref{app:general_embedding_result}. 

\subsection{Ablations over Local Prepending Strategy}
\label{sec:ablation}

\paragraph{Effect of Hyperparameter $K$.} 
The granularity of the input partitions, controlled by the hyperparameter $K$ (the number of sentences per summary block), presents a crucial trade-off between summary detail and coherence. To investigate HTP's sensitivity to the scale of local prepending (\textbf{Q3}), we ablate $K$ on two short-document retrieval datasets (NFCorpus, SciFact with maximum context length of 512), and two long-document datasets (SummScreenFD, 2WikiMultipleQA with maximum context length of 16,382).

As shown in \Cref{fig:vary-K-htp}, the results reveal distinct trends based on context length. For short documents, performance degrades as $K$ increases, suggesting that coarser summaries may elide critical, fine-grained details necessary for the task. Conversely, for long documents, performance improves with a larger $K$. We attribute this to two factors: (1) larger sentence blocks yield more coherent semantic summaries in lengthy, multi-topic contexts, and (2) a very small $K$ in a long document creates an excessive number of \texttt{<B-PST>} tokens, which can push the model into out-of-distribution behavior. This suggests the optimal partitioning strategy is dependent on document length and task granularity.

\begin{figure}
    \centering
    \includegraphics[width=0.92\linewidth]{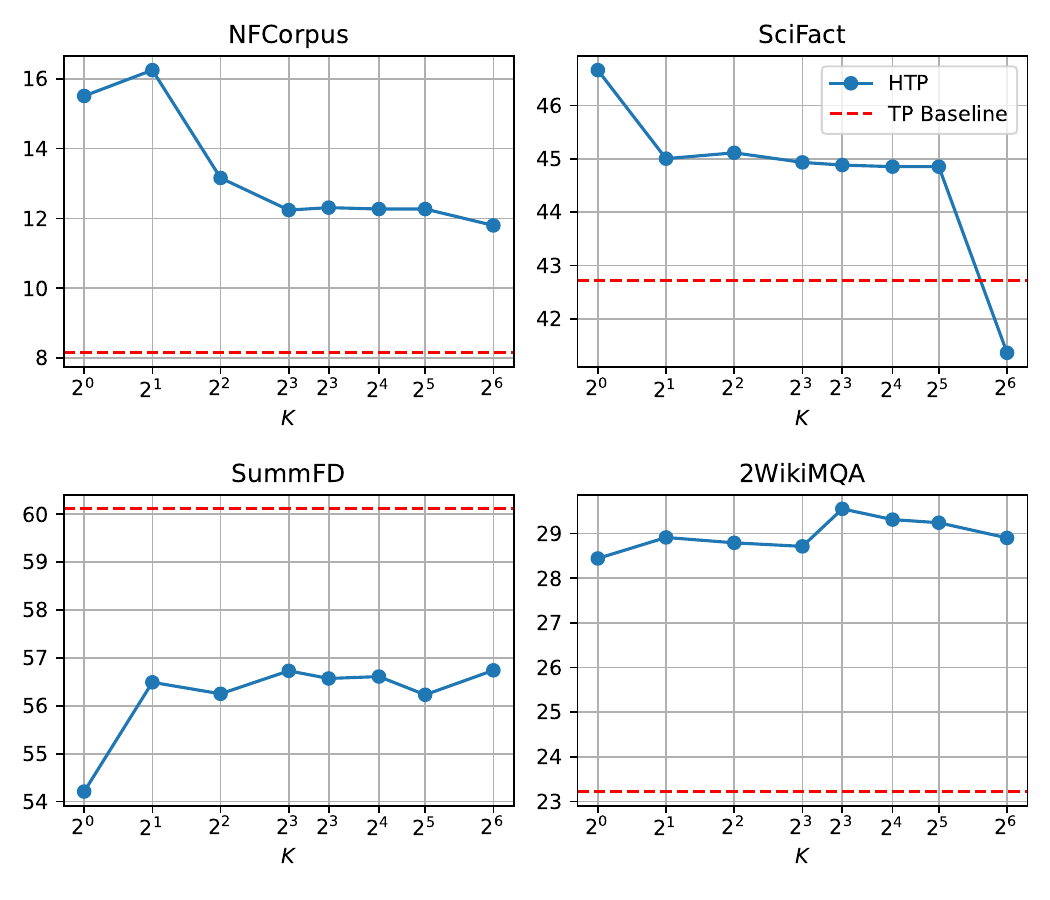}
    \vspace{-0.2in}
    \caption{
    Performance of HTP with various $K$.}
    \vspace{-0.18in}
    \label{fig:vary-K-htp}
\end{figure}

\paragraph{Sentences vs. Every Few tokens.} 
\label{sec:sentence_vs_token} 
We also conduct an ablation to validate our choice of partitioning the input along semantic boundaries (i.e., sentences) rather than at arbitrary fixed-length intervals. We compare our standard approach against a baseline that inserts \texttt{<PST>} tokens every $N$ tokens. To ensure a fair comparison, we dynamically set $N$ for each document to match the average sentence length in tokens, thereby keeping the total number of inserted \texttt{<PST>} tokens identical between the two methods (i.e., $N = \left\lfloor \frac{\texttt{num\_tokens}}{\texttt{n\_sentence}} \right\rfloor$).

The results, shown in \cref{fig:sentence-htp} for the NFCorpus and SciFact datasets, demonstrate that sentence-based partitioning consistently outperforms the fixed-interval baseline. This supports our hypothesis that leveraging natural linguistic boundaries allows the final token of a sentence to capture a more meaningful and coherent semantic summary, ultimately leading to higher-quality representations.

\subsection{HTP on Finetuned Embedding Models} 
Finally, we answer \textbf{(Q4)} by modifying over \emph{NV-Embed-v2}~\citep{lee2025nvembed}, a finetuned model for embedding tasks, and show that HTP can yield more performance. \emph
{NV-Embed-v2} is a general-purpose embedding model based on \emph{Mistral-7B}, enhanced with bi-directional attention and a novel attention aggregation mechanism in the final layer. As a result, we do not enable early-layer extraction or mean pooling as in the previous HTP setup. All other hyperparameter configurations remain the same as described in \Cref{app:llm_arch}.
\Cref{tab:nv_embed_perf} presents the results of \emph{NV-Embed-v2} with existing training-free methods. We observe that HTP consistently boosts performance over the base \emph{NV-Embed-v2} model across all three datasets. This result suggests that the gains from HTP are orthogonal to those achieved through finetuning. Furthermore, HTP demonstrates highly competitive performance against the other training-free methods, demonstrating its robust utility.

\begin{figure}[t]
    \centering
    \includegraphics[width=0.92\linewidth]{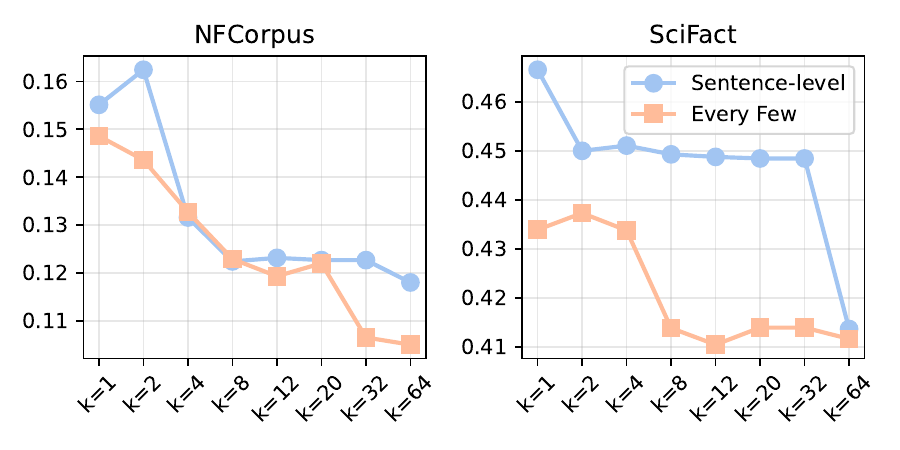}
    \vspace{-0.1in}
    \caption{Comparison of Sentence vs. Every Few tokens.}
    \label{fig:sentence-htp}
    \vspace{-0.1in}
\end{figure}

\begin{table}[t]
\centering
\resizebox{0.42\textwidth}{!}{
\begin{tabular}{lccc}
\toprule
\textbf{Model} & \textbf{NFCorpus} & \textbf{FiQA2018} & \textbf{SciFact} \\
\midrule
NV-Embed      & 45.10 & 62.63 & 80.92 \\
\; + Echo & 40.88 & 48.35 & 74.08 \\
\; + TP Mean &  \underline{46.60} & \underline{62.68} & \underline{81.64} \\
\; + HTP  & \textbf{47.26} & \textbf{64.18} & \textbf{82.52} \\
\bottomrule
\end{tabular}}
\caption{
 Performance on NV-Embed. We \textbf{bold} the top one and \underline{underline} the runner-up.}
\label{tab:nv_embed_perf}
\vspace{-0.15in}
\end{table}

\section{Conclusion}
\label{sec:conclusion}
We introduced \emph{Hierarchical Token Prepending} (HTP), a simple, training-free method that resolves critical information bottlenecks in LLM embeddings. Our approach is directly informed by theoretical and empirical analysis that pinpoints these specific failure modes. By partitioning the input and creating a hierarchy of block-level summary tokens, HTP establishes multiple backward information pathways, mitigating attention-level compression. Paired with a robust mean-pooling readout, HTP achieves consistent performance gains across extensive benchmarks, especially in long-context settings, offering a scalable approach to transforming powerful generative models into superior universal text encoders. Its ability to enhance both zero-shot and already finetuned models underscores its utility as a versatile tool for future document retrieval and analysis systems.

\section*{Limitations}
While HTP significantly improves zero-shot embeddings from decoder-only LLMs, its performance is not expected to surpass models that are extensively finetuned specifically for retrieval tasks. Our initial experiments showed that HTP can enhance an existing finetuned model (NV-Embed~\cite{lee2025nvembed}), but a broader investigation is needed to fully understand its interaction with diverse model architectures and training paradigms. 
A deeper look into the mechanisms of backward dependency and token prepending is also warranted, which we leave for future work.

\bibliography{reference}

\appendix
\onecolumn

\section{Proofs}
\label[appendix]{sec:proofs}

We follow \citet{barbero2024transformers} and study the over-squashing effect of the readout. In particular, we conduct \emph{sensitivity analysis} original introduced in the context of graph learning~\citep{topping2022understanding}, which aim to study the quantity $\partial \textbf{y}_n/\partial \textbf{v}_i^{(0)}$, i.e., the partial derivative of final output embedding (last token) with respect to the $i$-th token. We first define the studied decoder-only transformer architectures, which encompass the majority of the LLMs currently in use.

\subsection{Transformer architectures}
\label[appendix]{sec:transformers}

\paragraph{Notation.}
We write $[n]=\{1,\dots,n\}$. We use the Euclidean norm $\|\cdot\|$ for vectors and the induced operator (spectral) norm for Jacobians. $\delta_{ij}$ represents \emph{Kronecker delta}: $\delta_{ij}=1$ if $i=j$ and $0$ otherwise. We denote the $i$‑th standard basis vector by $\mathbf{e}_i$ and the all‑ones vector by $\mathbf{1}$.

\paragraph{Setup.} We follow the model specification used by \citet{barbero2024transformers}, and study decoder‑only, causal Transformers of $d$ dimension on a length‑$n$ sequence of token states
$\mathbf{v}^{(0)}=(\mathbf{v}^{(0)}_1,\dots,\mathbf{v}^{(0)}_n)$ with $\mathbf{v}^{(0)}_i\in\mathbb{R}^d$. We let $\mathbf{Q}, \mathbf{K}, \mathbf{V} \in \mathbb{R}^{n \times d}$ be the query, key, and value matrices for a sequence of $n$ tokens with $d$-dimensional embeddings. We denote the $i$-th token’s query, key, and value vectors by $\mathbf{q}_i, \mathbf{k}_i, \mathbf{v}_i \in \mathbb{R}^{d}$.
Let $\mathbf{p}_{ij}\in\mathbb{R}^{2e}$ be the $2e$ dimensional vectors encoding positional information between positions $i$ and $j$.
We assume $\sup_{i,j}\|\mathbf{p}_{ij}\|\le P_{\max}<\infty$ (bounded positional encodings), which holds for most positional schemes used in practice, such as Rotational positional encoding (RoPE)~\citep{rope}.

\paragraph{Single‑head Pre‑LN block.}
For layer $\ell\in\{0,\dots,L-1\}$ and position $i\in[n]$, define
\begin{align}
\mathbf{z}_i^{(\ell)}
  &= \mathbf{v}_i^{(\ell)} \;+\; \sum_{j \le i} \alpha_{ij}^{(\ell)} \,\mathrm{norm}^{(\ell)}_1\!\bigl( \mathbf{v}_j^{(\ell)} \bigr),
  \tag{Attn}\label{eq:attn-update}\\
\alpha_{ij}^{(\ell)}
  &= \frac{\exp\!\Big(k\!\big(\mathbf{q}_i^{(\ell)},\, \mathbf{k}_j^{(\ell)},\, \mathbf{p}_{ij}\big)\Big)}
           {\sum_{w \le i} \exp\!\Big(k\!\big(\mathbf{q}_i^{(\ell)},\, \mathbf{k}_w^{(\ell)},\, \mathbf{p}_{iw}\big)\Big)},
  \quad j\le i,\qquad \alpha_{ij}^{(\ell)}=0 \text{ if } j>i,
  \tag{Softmax}\label{eq:attn-weights}\\
\mathbf{v}_i^{(\ell+1)}
  &= \mathbf{z}_i^{(\ell)} \;+\; \psi^{(\ell)}\!\Big(\mathrm{norm}^{(\ell)}_2\!\big(\mathbf{z}_i^{(\ell)}\big)\Big),
  \tag{MLP}\label{eq:mlp-update}
\end{align}

where $k:\mathbb{R}^d\times\mathbb{R}^d\times\mathbb{R}^{2e}\to\mathbb{R}$ is a scoring function (e.g.\ a bilinear form with a positional bias), $\psi^{(\ell)}:\mathbb{R}^d\to\mathbb{R}^d$ is the MLP at layer $\ell$, and $\mathrm{norm}^{(\ell)}_1,\mathrm{norm}^{(\ell)}_2:\mathbb{R}^d\to\mathbb{R}^d$ are the (pre‑activation) normalization maps (typically LayerNorm~\citep{ba2016layernormalization}).
Causality is enforced by the mask $j\le i$ in \eqref{eq:attn-update}–\eqref{eq:attn-weights}, hence $\mathbf{v}^{(\ell)}_j$ depends only on $\{\mathbf{v}^{(\ell-1)}_i:i\le j\}$.

\paragraph{Attention matrix.}
Let $\boldsymbol{\Lambda}^{(\ell)}\in\mathbb{R}^{n\times n}$ collect attention weights with $(\boldsymbol{\Lambda}^{(\ell)})_{ij}=\alpha^{(\ell)}_{ij}$.
Each $\boldsymbol{\Lambda}^{(\ell)}$ is \emph{row–stochastic} ($\boldsymbol{\Lambda}^{(\ell)}\mathbf{1}=\mathbf{1}$) and \emph{lower–triangular} ($( \boldsymbol{\Lambda}^{(\ell)} )_{ij}=0$ if $j>i$).
This lower–triangular, row–stochastic structure is preserved under products.%
See Lemmas B.6–B.7 in \citet{barbero2024transformers}. %

\paragraph{Final normalization and readouts.}
After layer $L$, we set
\[
\mathbf{y}_i \;=\; \mathrm{norm}_3\!\bigl(\mathbf{v}_i^{(L)}\bigr), \qquad i \in [n].
\]
where $\mathrm{norm}_3:\mathbb{R}^d\to\mathbb{R}^d$ is a normalization (often LayerNorm).
For last-token embedding, the readout usually uses the \emph{last token} $\mathbf{y}_n$; for mean‑pooling, we set $\bar{\mathbf{y}}=\frac1n\sum_{i=1}^n \mathbf{y}_i$.

\paragraph{Assumptions.}
Following \citet{barbero2024transformers}, we adopt the following simplifications for layerwise sensitivity bounds. We write $\operatorname{Lipschitz}(f)$ for the Lipschitz constant of a map $f$, i.e., any $L$ such that
$$\|f(x)-f(y)\|\le L\,\|x-y\|,\qquad \forall x,y.$$
\begin{enumerate}[leftmargin=*,itemsep=0pt,topsep=2pt]
\item During differentiation, the attention weights $\alpha_{ij}^{(\ell)}$ are treated as input–independent constants.
\item Each normalization operator has a Lipschitz bound with known scalings, with
\begin{align*}
\operatorname{Lipschitz}\!\big(\mathrm{norm}_1^{(\ell)}\big)\;\le\;\frac{1}{\beta_1^{(\ell)}},\quad
\operatorname{Lipschitz}\!\big(\mathrm{norm}_2^{(\ell)}\big)\;\le\;\frac{1}{\beta_2^{(\ell)}},\quad
\operatorname{Lipschitz}\!\big(\mathrm{norm}_3\big)\;\le\;\frac{1}{\beta_3},
\end{align*}
for some $\beta_1^{(\ell)},\beta_2^{(\ell)},\beta_3>0$.
\item The MLP in each layer $\psi^{(\ell)}$ admits a  Lipschitz constant $\sigma_{\psi}^{(\ell)}$.
\end{enumerate}

\begin{remark}[Multi‑head and projections]
The statements below naturally extend to $H$ heads by stacking the single‑head updates in parallel and absorbing the output projection into the constants $\sigma_\psi^{(\ell)}$ and $\beta_2^{(\ell)}$; all proofs go through with the same structure because the causal, residual, and row‑stochastic properties are preserved.
\end{remark}

We now proceed to show the layerwise sensitivity bounds. First, we note that we follow a very similar proof idea shown in \citet{barbero2024transformers}: in fact, the layerwise bounds and the last-token embeddings are exactly the ones shown in Theorem 5.1 and Theorem B.5 of \citet{barbero2024transformers} under mild modification. We include the results nonetheless for the sake of completeness, which aids the derivation of mean-token readouts. 

\subsection{Layerwise bounds and path expansion}

The residual form \eqref{eq:attn-update}–\eqref{eq:mlp-update} and our Lipschitz bounds imply, for $j\ge i$,

\begin{align}
\Big\|\frac{\partial \mathbf{v}_j^{(\ell+1)}}{\partial \mathbf{v}_i^{(\ell)}}\Big\|
&=
\Big\|\frac{\partial}{\partial \mathbf{v}_i^{(\ell)}}\Big[\psi^{(\ell)}\!\big(\mathrm{norm}_2^{(\ell)}(\mathbf{z}_j^{(\ell)})\big)
+\mathbf{z}_j^{(\ell)}\Big]\Big\| \\
&\le \Big(\tfrac{\sigma_\psi^{(\ell)}}{\beta_2^{(\ell)}}+1\Big)
\Big\|\frac{\partial \mathbf{z}_j^{(\ell)}}{\partial \mathbf{v}_i^{(\ell)}}\Big\| \\
&=\Big(\tfrac{\sigma_\psi^{(\ell)}}{\beta_2^{(\ell)}}+1\Big)
\Big\|\frac{\partial}{\partial \mathbf{v}_i^{(\ell)}}\Big[\sum_{m\le j}\alpha^{(\ell)}_{j,m}\,
\mathrm{norm}_1^{(\ell)}\!\big(\mathbf{v}_m^{(\ell)}\big)+\mathbf{v}_j^{(\ell)}\Big]\Big\| \\
&\le \Big(\tfrac{\sigma_\psi^{(\ell)}}{\beta_2^{(\ell)}}+1\Big)
\Big(\tfrac{\alpha^{(\ell)}_{j,i}}{\beta_1^{(\ell)}}+\delta_{j,i}\Big), \tag{$\ast$}
\label{eq:one-layer-bound}
\end{align}

and 
$\frac{\partial \mathbf{v}_{j}^{(\ell+1)}}{\partial \mathbf{v}_i^{(\ell)}}=0$ if $j<i$.

Define the \emph{residual‑augmented attention} $\mathbf{\bar\alpha}^{(\ell)}\in\mathbb{R}^{n\times n}$ by
\[
\mathbf{\bar\alpha}^{(\ell)}_{j,i}:=\frac{\alpha^{(\ell)}_{j,i}}{\beta_1^{(\ell)}}+\delta_{j,i}
\quad\text{for }i\le j,
\qquad
\mathbf{\bar\alpha}^{(\ell)}_{j,i}=0\text{ for }i>j,
\]
and let the row‑sum be $r_\ell:=\sum_{i\le j}\mathbf{\bar\alpha}^{(\ell)}_{j,i}=1+\tfrac{1}{\beta_1^{(\ell)}}$ (because $\sum_{i\le j}\alpha^{(\ell)}_{j,i}=1$). Normalize
\[
\mathbf{M}^{(\ell)}:=\frac{1}{r_\ell}\,\mathbf{\bar\alpha}^{(\ell)}
\quad\Longrightarrow\quad 
\mathbf{M}^{(\ell)}\mathbf{1}=\mathbf{1},
\]
i.e., $\mathbf{M}^{(\ell)}$ is lower‑triangular and row‑stochastic.

\begin{lemma}[Path–sum equals matrix entry]\label{lem:path-sum-matrix}
Let $\mathbf{\bar\alpha}^{(\ell)}$ and $\mathbf{M}^{(\ell)}=\frac{1}{r_\ell}\mathbf{\bar\alpha}^{(\ell)}$ be as above, and
$\mathbf{A}:=\mathbf{M}^{(L-1)}\cdots \mathbf{M}^{(0)}$. For any $i\le j$,
\[
\sum_{k_1\ge i}\sum_{k_2\ge k_1}\cdots\sum_{k_L\ge k_{L-1}}
\bar{\alpha}^{(L-1)}_{j,k_L}\!\!\prod_{\ell=2}^{L-1}\bar{\alpha}^{(\ell-1)}_{k_\ell,k_{\ell-1}}\,
\bar{\alpha}^{(0)}_{k_1,i}
\;=\;
\Big(\prod_{\ell=0}^{L-1} r_\ell\Big)\,\mathbf{A}_{j,i}.
\]
\end{lemma}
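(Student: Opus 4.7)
The plan is to unpack the matrix product and invoke causality. First, by iterated application of the standard matrix-product formula $(\mathbf{X}\mathbf{Y})_{p,q}=\sum_{r} \mathbf{X}_{p,r}\mathbf{Y}_{r,q}$, I expand
$$\mathbf{A}_{j,i} = \sum_{k_1,\ldots,k_{L-1}\in[n]} \mathbf{M}^{(L-1)}_{j,k_{L-1}}\,\mathbf{M}^{(L-2)}_{k_{L-1},k_{L-2}}\cdots\mathbf{M}^{(1)}_{k_2,k_1}\,\mathbf{M}^{(0)}_{k_1,i}.$$
Substituting $\mathbf{M}^{(\ell)}=r_\ell^{-1}\,\mathbf{\bar\alpha}^{(\ell)}$ pulls the constant $\prod_{\ell=0}^{L-1} r_\ell^{-1}$ outside the summation, leaving a sum of products of $\bar\alpha^{(\ell)}$-entries.

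Next, I invoke the structural fact that each $\mathbf{\bar\alpha}^{(\ell)}$ is lower-triangular: the causal mask forces $\alpha^{(\ell)}_{p,q}=0$ for $p<q$, and the residual Kronecker delta contributes only on the diagonal, so $\bar\alpha^{(\ell)}_{p,q}=0$ whenever $p<q$. Chaining these constraints layer by layer, any nonzero term in the expanded sum must satisfy $i\le k_1\le k_2\le\cdots\le k_{L-1}\le j$. Hence the unrestricted sums over $[n]$ coincide with the nested monotone sums displayed in the lemma statement, and multiplying both sides by $\prod_{\ell=0}^{L-1} r_\ell$ yields the claimed identity.

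I do not expect a substantive obstacle: the result is essentially a bookkeeping restatement combining the definition of iterated matrix multiplication with the preservation of lower-triangularity under products, the latter already established in Lemmas B.6--B.7 of \citet{barbero2024transformers}. The only care required is to keep the index correspondence consistent between the ``path'' presentation and the matrix-product presentation, so that the number of intermediate summation indices matches the $L$ factors in the product of $\mathbf{M}^{(\ell)}$'s, and to recognize that extending an upper summation index from $k_{\ell-1}$ up to $n$ contributes nothing by the lower-triangular vanishing.
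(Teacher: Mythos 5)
Your proof is correct and takes essentially the same route as the paper's: the paper goes left-to-right (recognize the nested monotone sum as the $(j,i)$ entry of $\bar{\boldsymbol{\alpha}}^{(L-1)}\cdots\bar{\boldsymbol{\alpha}}^{(0)}$, then factor out the $r_\ell$'s), while you go right-to-left (expand $\mathbf{A}_{j,i}$ via iterated matrix multiplication, pull out the $r_\ell$'s, and use lower-triangularity to turn unrestricted sums into monotone ones); these are the same argument. You are also right to flag the index bookkeeping: a product of $L$ matrices has $L-1$ intermediate indices, as you correctly write, so the lemma's displayed $k_1,\dots,k_L$ together with the stray $k_L$ in $\bar\alpha^{(L-1)}_{j,k_L}$ is a notational off-by-one (it should be $k_{L-1}$) that does not affect the substance.
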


\begin{proof}
Write the left-hand side as the $(j,i)$ entry of the matrix product
$\mathbf{\bar\alpha}^{(L-1)}\cdots \mathbf{\bar\alpha}^{(0)}$:
by definition of matrix multiplication for lower-triangular matrices,
\[
\big(\mathbf{\bar\alpha}^{(L-1)}\cdots \mathbf{\bar\alpha}^{(0)}\big)_{j,i}
=\sum_{k_{L}\ge \cdots \ge k_1}\mathbf{\bar\alpha}^{(L-1)}_{j,k_L}\cdots \mathbf{\bar\alpha}^{(0)}_{k_1,i},
\]
where the index constraints $k_\ell\ge k_{\ell-1}$ (and $j\ge k_L$, $k_1\ge i$) are exactly those enforced by lower-triangularity.
Now factor each layer’s row-sum: $\mathbf{\bar\alpha}^{(\ell)}=r_\ell \mathbf{M}^{(\ell)}$.
Hence
\[
\mathbf{\bar\alpha}^{(L-1)}\cdots \mathbf{\bar\alpha}^{(0)}
=\Big(\prod_{\ell=0}^{L-1}r_\ell\Big)\, \mathbf{M}^{(L-1)}\cdots \mathbf{M}^{(0)}
=\Big(\prod_{\ell=0}^{L-1}r_\ell\Big)\,\mathbf{A}.
\]
Taking the $(j,i)$ entry gives the claim. The nested sums are precisely the $(j,i)$ entry of the product $\mathbf{\bar\alpha}^{(L-1)}\cdots\mathbf{\bar\alpha}^{(0)}$ because lower-triangularity imposes the constraints $j\ge k_L\ge\cdots\ge k_1\ge i$.

\end{proof}

\subsection{Oversquashing bounds for last‑token and mean‑token readouts}

We are now ready to present the main results. Note that the following results for the last token directly correlate to Theorem B.5, \citet{barbero2024transformers}.

\begin{theorem}[Last‑token vs mean‑token sensitivity]\label{thm:last-vs-mean}
Under the assumptions above, let
\[
C \;:=\; \frac{1}{\beta_3}\,\prod_{\ell=0}^{L-1}\Big(\frac{\sigma_\psi^{(\ell)}}{\beta_2^{(\ell)}}+1\Big),
\qquad
K_L \;:=\; C\,\prod_{\ell=0}^{L-1} r_\ell,
\qquad
\mathbf{A} \;:=\; \mathbf{M}^{(L-1)}\cdots \mathbf{M}^{(0)}.
\]
Then for every input position $i\in[n]$:
\begin{align*}
\textbf{(a) Last token:}\quad
&\Big\|\frac{\mathbf{\partial y}_n}{\mathbf{\partial v}_i^{(0)}}\Big\| \;\le\; K_L\,\mathbf{A}_{n,i}.\\[2pt]
\textbf{(b) Mean pooling:}\quad
&\Big\|\frac{\mathbf{\partial \bar y}}{\mathbf{\partial v}_i^{(0)}}\Big\| \;\le\; \frac{K_L}{n}\sum_{j=1}^n\mathbf{A}_{j,i}. 
\end{align*}
\end{theorem}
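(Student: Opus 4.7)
The plan is to differentiate the full composition $\mathbf{v}^{(0)} \mapsto \mathbf{y}$ by chain rule, apply the single-layer bound $(\ast)$ at each step, and then recognize the resulting nested sum of attention weights as an entry of a product matrix via \Cref{lem:path-sum-matrix}. First I would write, for any target position $j \in [n]$,
\begin{align*}
\frac{\partial \mathbf{y}_j}{\partial \mathbf{v}_i^{(0)}}
\;=\; \frac{\partial \mathrm{norm}_3(\mathbf{v}_j^{(L)})}{\partial \mathbf{v}_j^{(L)}}
\sum_{k_L \ge k_{L-1} \ge \cdots \ge k_1 \ge i}
\prod_{\ell=0}^{L-1} \frac{\partial \mathbf{v}_{k_{\ell+1}}^{(\ell+1)}}{\partial \mathbf{v}_{k_\ell}^{(\ell)}},
\end{align*}
with the convention $k_{L+1}=j$ and $k_0=i$, using causality (only terms with $k_{\ell+1}\ge k_\ell$ are nonzero because each layer's Jacobian vanishes when the output index precedes the input index).

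Next, I would take operator norms, apply submultiplicativity, and insert the per-layer bound $(\ast)$ to upper bound $\|\partial \mathbf{v}_{k_{\ell+1}}^{(\ell+1)}/\partial \mathbf{v}_{k_\ell}^{(\ell)}\| \le (\sigma_\psi^{(\ell)}/\beta_2^{(\ell)}+1)\,\bar\alpha^{(\ell)}_{k_{\ell+1},k_\ell}$, and the normalization bound $\|\partial \mathrm{norm}_3/\partial \mathbf{v}\|\le 1/\beta_3$. Pulling the layer-independent constants outside the nested sum yields
\begin{align*}
\Big\|\frac{\partial \mathbf{y}_j}{\partial \mathbf{v}_i^{(0)}}\Big\|
\;\le\; C \sum_{k_1\ge i}\cdots\sum_{k_L\ge k_{L-1}}
\bar\alpha^{(L-1)}_{j,k_L}\cdots \bar\alpha^{(0)}_{k_1,i},
\end{align*}
and by \Cref{lem:path-sum-matrix} the right-hand side equals $C \prod_{\ell=0}^{L-1} r_\ell \cdot \mathbf{A}_{j,i} = K_L\,\mathbf{A}_{j,i}$. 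Setting $j=n$ gives part (a). For part (b), I would use linearity of differentiation, the triangle inequality for the operator norm, and the bound just proved:
\begin{align*}
\Big\|\frac{\partial \bar{\mathbf{y}}}{\partial \mathbf{v}_i^{(0)}}\Big\|
\;=\; \Big\|\frac{1}{n}\sum_{j=1}^n \frac{\partial \mathbf{y}_j}{\partial \mathbf{v}_i^{(0)}}\Big\|
\;\le\; \frac{1}{n}\sum_{j=1}^n K_L\,\mathbf{A}_{j,i}
\;=\; \frac{K_L}{n}\sum_{j=1}^n \mathbf{A}_{j,i},
\end{align*}
which is exactly the mean-pooling bound.

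The main technical obstacle is the careful bookkeeping of the chain-rule expansion: one must verify that the full Jacobian across $L$ layers decomposes as a sum over monotone index paths $i\le k_1\le\cdots\le k_L\le j$, each weighted by the product of single-layer terms $\bar\alpha^{(\ell)}_{k_{\ell+1},k_\ell}$, and that submultiplicativity of the operator norm correctly ports the scalar bound $(\ast)$ through the whole composition (this is where treating the attention weights as constants, per the stated assumption, is crucial so that $\alpha^{(\ell)}_{ij}$ contributes only a scalar factor rather than an additional Jacobian term). Once that skeleton is in place, \Cref{lem:path-sum-matrix} delivers the matrix entry $\mathbf{A}_{j,i}$ essentially for free, and the mean-pooling bound follows by a one-line averaging argument.
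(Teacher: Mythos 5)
Your proof takes essentially the same route as the paper's: chain-rule expansion over causal paths, per-layer bound $(\ast)$, the path-sum lemma (\Cref{lem:path-sum-matrix}), and linearity for the mean-pooling case. The only (welcome) difference is that you explicitly derive $\|\partial\mathbf{y}_j/\partial\mathbf{v}_i^{(0)}\|\le K_L\,\mathbf{A}_{j,i}$ for an arbitrary target $j$ before specializing to $j=n$ and before averaging, which makes part (b) a one-line consequence rather than an implicit appeal; one minor bookkeeping slip is that your convention ``$k_{L+1}=j$'' should read ``$k_L=j$'' (the product $\prod_{\ell=0}^{L-1}$ already terminates at $\mathbf{v}^{(L)}_{k_L}$, which must coincide with $\mathbf{v}^{(L)}_j$), though the paper's own path-sum notation has the same off-by-one.
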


\begin{proof}
By the chain rule across layers, $\mathbf{\partial v}_n^{(L)}/\mathbf{\partial v}_i^{(0)}$ equals a sum over causal paths $i\!\to\!k_1\!\to\!\cdots\!\to\!k_L\!\to\!n$ with one Jacobian factor per layer. Following the proof strategies in Theorem B.5, \citet{barbero2024transformers}  and using \eqref{eq:one-layer-bound} at each layer and $\|\partial \mathbf{y}_n/\partial \mathbf{v}_n^{(L)}\|\le 1/\beta_3$ gives
\[
\Big\|\frac{\mathbf{\partial y}_n}{\mathbf{\partial v}_i^{(0)}}\Big\|
\;\le\;
\frac{1}{\beta_3}\!\prod_{\ell=0}^{L-1}\!\Big(\frac{\sigma_\psi^{(\ell)}}{\beta_2^{(\ell)}}+1\Big)
\!\!\sum_{k_1\ge i}\cdots\sum_{k_L\ge k_{L-1}}\!
\bar{\alpha}^{(L-1)}_{n,k_L}\!\!\prod_{\ell=2}^{L-1}\bar{\alpha}^{(\ell-1)}_{k_\ell,k_{\ell-1}}\,
\bar{\alpha}^{(0)}_{k_1,i}.
\]
Lemma~\ref{lem:path-sum-matrix} converts the multi‑sum to $\big(\prod_\ell r_\ell\big)\mathbf{A}_{n,i}$, yielding (a).
For (b), by linearity:
\[
\frac{\partial \mathbf{\bar y}}{\mathbf{\partial v}_i^{(0)}}=\frac{1}{n}\sum_{j=1}^n\frac{\partial \mathbf{y}_j}{\mathbf{\partial v}_i^{(0)}}
\quad\Rightarrow\quad
\Big\|\frac{\partial \mathbf{\bar y}}{\mathbf{\partial v}_i^{(0)}}\Big\|
\le \frac{K_L}{n}\sum_{j=1}^n\mathbf{A}_{j,i}.
\]

Here, the term $\sum_{j=1}^n \mathbf{A}_{j,i}$ is the sum of the $i$-th column of $\mathbf{A}$.

\end{proof}

\paragraph{Interpretation and Discussion}

Our sensitivity bounds reveal both a depth-dependent growth factor and a structural transport term. The growth factor
\[
K_L \;=\;
\underbrace{\frac{1}{\beta_3}}_{\text{final norm}}
\underbrace{\prod_{\ell=0}^{L-1}\Big(\tfrac{\sigma_\psi^{(\ell)}}{\beta_2^{(\ell)}}+1\Big)}_{\text{MLP+residual per layer}}
\underbrace{\prod_{\ell=0}^{L-1}\Big(1+\tfrac{1}{\beta_1^{(\ell)}}\Big)}_{\text{attn+residual row sums}}
\]
typically grows (often exponentially) with depth \(L\), scaling the magnitude of the bound. Orthogonal to this, the structural term $\mathbf{A}=\mathbf{M}^{(L-1)}\cdots \mathbf{M}^{(0)}$
 governs how signal moves and attenuates across layers, and the choice of readout changes \emph{which} part of \(\mathbf{A}\) the bound depends on.

With \emph{last-token readout}, the sensitivity is controlled by a \textbf{single entry} \(\mathbf{A}_{n,i}\), i.e., the influence that reaches one fixed “sink” \(n\) from source \(i\). Random-walk intuition suggests that a single entry like $\mathbf{A}_{n,i}$ can shrink rapidly with depth. In the homogeneous left-drifting regime of \citet{barbero2024transformers}, Proposition~B.8, one even has $\mathbf{A}=\mathbf{M}^L\to \mathbf{1}e_1^\top$, hence $\mathbf{A}_{n,i}\to 0$ for $i>1$.

In contrast, \emph{mean-pooling} depends on the \textbf{entire column sum} \(\sum_j \mathbf{A}_{j,i}\), which aggregates influence delivered from \(i\) to \emph{all} outputs. This aggregates total outgoing mass rather than betting on a single path to a single sink, and thus is structurally more robust: it does not suffer the same guaranteed decay as an individual matrix entry. Consequently, while \(K_L\) sets the overall scale of the bound, the readout choice determines whether the structural term induces decay (last-token) or preserves signal (mean-pooling), explaining why mean-pooling provably mitigates over-squashing of early tokens.

\twocolumn

\section{Experiment details}
\label[appendix]{sec:experiment_details}

\subsection{Retrieval tasks}
\label[appendix]{app:general_embedding_tasks}
The detailed description of the statistics of the BEIR evaluation dataset can be found in \cite{thakur2021beir}. We show the characteristics of the datasets of the real-world subsets of the LongEmbed\cite{zhu-etal-2024-longembed} datasets in \cref{tab:longembed-stats}.

\subsubsection{LLM Architecture Setup}
\label[appendix]{app:llm_arch}
\cref{tab:tp_model_configs} shows the LLM configuration for TP-based and HTP models. We briefly describe the version of LLM used, the starting and ending layers of the TP methods (following \cite{fu2024token}), and the early exit layer.

\begin{table}[t]
\centering
\footnotesize
\scalebox{0.9}{
\begin{tabular}{@{}ll@{}}
\toprule
\textbf{Model} & \textbf{Configuration Details} \\
\midrule
\texttt{mistral-instruct} &
\begin{tabular}[t]{@{}l@{}}
Mistral-7B-Instruct-v0.3 \\
TP plan applied from layer 1 to 7 \\
Uses output from third to last layer \\
\end{tabular} \\

\texttt{gemma-2-9b} &
\begin{tabular}[t]{@{}l@{}}
Gemma-2-9b \\
TP plan applied from layer 1 to 6 \\
Uses output from second to last layer \\
\end{tabular} \\

\texttt{qwen2-instruct} &
\begin{tabular}[t]{@{}l@{}}
Qwen2.5-1.5B-Instruct \\
TP plan applied from layer 1 to 7 \\
Uses output from second to last layer \\
\end{tabular} \\
\bottomrule
\end{tabular} 
}
\caption{Summary of TP-based and HTP model configurations.} \label{tab:tp_model_configs}
\end{table}

\subsubsection{Instruction For Retrievals}
\cref{tab:retrieval_instructions} shows the instructions used in retrieval tasks for both BEIR \cite{thakur2021beir} and LongEmbed \cite{zhu-etal-2024-longembed}.

\begin{table*}[t]
\footnotesize
\centering
\begin{tabular}{@{}ll@{}}
\toprule
\textbf{Task Name} & \textbf{Instruction Template} \\
\midrule
ArguAna & Given a claim, retrieve documents that support or refute the claim \\
FiQA2018 & Given a financial question, retrieve user replies that best answer the question \\
HotpotQA & Given a multi-hop question, retrieve documents that can help answer the question \\
NFCorpus & Given a question, retrieve relevant documents that answer the question \\
SCIDOCS & Given a scientific paper title, retrieve paper abstracts that are cited by the given paper \\
SciFact & Given a scientific claim, retrieve documents that support or refute the claim \\
TREC-COVID & Given a query on COVID-19, retrieve documents that answer the query \\
\midrule
NarrativeQA      &  Retrieve the relevant document \\
QMSum           & Retrieve the relevant document \\
2WikiMultihopQA  & Retrieve the relevant document \\
SummScreenFD     & Retrieve the relevant document \\
\bottomrule
\end{tabular}
\caption{Instructions used for evaluation on the BEIR benchmark and LongEmbed benchmark.} \label{tab:retrieval_instructions}
\end{table*}

\subsection{General embedding tasks}
\label[appendix]{app:general_embedding_tasks}

To evaluate the generalization capabilities of HTP and baseline methods, we benchmark on a diverse set of 30 public datasets from the Massive Text Embedding Benchmark (MTEB)~\citep{muennighoff2023mteb}. We report the average performance across four categories of tasks: Classification, Reranking, Clustering, and Semantic Textual Similarity (STS).

The specific datasets used in our evaluation are as follows:
\begin{itemize}

    \item \textbf{Classification} (11 datasets): We use accuracy as the metric for AmazonCounterfactual, AmazonReview, Banking77, Emotion, Imdb, MassiveIntent, MassiveScenario, MTOPDomain, MTOPIntent, ToxicConversations, and TweetSentiment.

    \item \textbf{Reranking} (3 datasets): We use Mean Average Precision (MAP) for AskUbuntuDupQuestions, MindSmallReranking, and StackOverflowDupQuestions.

    \item \textbf{Clustering} (11 datasets): We use the V-measure score for ArxivClusteringP2P, ArxivClusteringS2S, BiorxivClusteringP2P, BiorxivClusteringS2S, MedrxivClusteringP2P, MedrxivClusteringS2S, RedditClustering,RedditClusteringP2P,
StackExchangeClustering,
StackExchangeClusteringP2P and
TwentyNewsgroupsClustering.


    \item \textbf{STS} (5 datasets): We report Spearman correlation for the standard STS12 through STS16 benchmarks.

\end{itemize}

\subsubsection{Experiment Setup}
\label[appendix]{app:general_embedding_exp}

\cref{tab:mteb_instructions} shows the instructions (prompts) used in acquiring the general embeddings.

\begin{table*}[t]
\footnotesize
\centering
\begin{tabular}{@{}lp{0.6\linewidth}}
\toprule
\textbf{Task Name} & \textbf{Instruction Template} \\
\midrule
AmazonCounterfactualClassification & Classify a given Amazon customer review text as either counterfactual or non-counterfactual \\
AmazonPolarityClassification & Classify Amazon reviews into positive or negative sentiment \\
AmazonReviewsClassification & Classify the given Amazon review into its appropriate rating category \\
Banking77Classification & Given an online banking query, find the corresponding intents \\
EmotionClassification & Classify the emotion expressed in the given Twitter message into one of the six emotions: anger, fear, joy, love, sadness, and surprise \\
ImdbClassification & Classify the sentiment expressed in the given movie review text from the IMDB dataset \\
MassiveIntentClassification & Given a user utterance as query, find the user intents \\
MassiveScenarioClassification & Given a user utterance as query, find the user scenarios \\
MTOPDomainClassification & Classify the intent domain of the given utterance in task-oriented conversation \\
MTOPIntentClassification & Classify the intent of the given utterance in task-oriented conversation \\
ToxicConversationsClassification & Classify the given comments as either toxic or not toxic \\
TweetSentimentExtractionClassification & Classify the sentiment of a given tweet as either positive, negative, or neutral \\
\midrule
ArxivClusteringP2P & Identify the main and secondary category of Arxiv papers based on the titles and abstracts \\
ArxivClusteringS2S & Identify the main and secondary category of Arxiv papers based on the titles \\
BiorxivClusteringP2P & Identify the main category of Biorxiv papers based on the titles and abstracts \\
BiorxivClusteringS2S & Identify the main category of Biorxiv papers based on the titles \\
MedrxivClusteringP2P & Identify the main category of Medrxiv papers based on the titles and abstracts \\
MedrxivClusteringS2S & Identify the main category of Medrxiv papers based on the titles \\
RedditClustering & Identify the topic of the given Reddit posts based on the titles \\
RedditClusteringP2P & Identify the topic of the Reddit posts based on the titles and posts \\
StackExchangeClustering & Identify the topic or theme of StackExchange posts based on the titles \\
StackExchangeClusteringP2P & Identify the topic or theme of the StackExchange posts based on the given paragraphs \\
TwentyNewsgroupsClustering & Identify the topic or theme of the given news articles \\
\midrule
AskUbuntuDupQuestions & Retrieve duplicate questions from AskUbuntu forum \\
MindSmallReranking & Retrieve relevant news articles based on user browsing history \\
StackOverflowDupQuestions & Retrieve duplicate questions from StackOverflow forum \\
\bottomrule
\end{tabular}
\caption{Prompts used for evaluation on the general embedding tasks. } \label{tab:mteb_instructions}
\end{table*}

\subsubsection{General Embedding Result}
\label[appendix]{app:general_embedding_result}

We show the general embedding performance for Classification tasks in \cref{tab:nlp_tasks_classification}, Reranking tasks in \cref{tab:nlp_tasks_reranking}, Clustering tasks in \cref{tab:nlp_tasks_cluster},
and STS tasks in \cref{tab:nlp_tasks_sts}.

\begin{table}[t]
\footnotesize
\setlength{\tabcolsep}{4pt}
\centering
\scalebox{0.9}{
\begin{tabular}{@{}lcccc@{}}
\toprule
\textbf{Task} & \textbf{Echo} & \textbf{Prompt-} & \textbf{TP w.} & \textbf{HTP} \\
& \textbf{Mean} & \textbf{EOL} & \textbf{PromptEOL} & \textbf{(Ours)} \\
\midrule
AmazonCounterfactual     & 70.36 & 73.88 & 74.09 & 78.64 \\
Banking77                & 77.50 & 66.46 & 66.46 & 68.88 \\
Emotion                  & 38.90 & 46.44 & 46.43 & 47.51 \\
IMDb                     & 74.17 & 80.20 & 90.20 & 78.35 \\
MassiveIntent            & 70.61 & 71.71 & 67.25 & 75.20 \\
MassiveScenario          & 75.42 & 72.88 & 69.78 & 76.58 \\
MTOPDomain               & 88.77 & 87.19 & 87.19 & 85.39 \\
MTOPIntent               & 76.28 & 82.43 & 82.43 & 78.05 \\
ToxicConversations       & 65.49 & 73.13 & 73.13 & 73.14 \\
TweetSentiment           & 50.04 & 48.44 & 61.21 & 49.26 \\
AmazonReview             & 39.76 & 46.56 & 46.56 & 41.84 \\
\midrule
\textbf{Average}         & 66.12 & 68.12 & \textbf{69.52} & \underline{68.44} \\
\bottomrule
\end{tabular}
}
\caption{Performance on classification tasks (\%). We \textbf{bold} the top one and \underline{underline} the runner-up in the average row.} \label{tab:nlp_tasks_classification}
\end{table}

\begin{table}[t]
\footnotesize
\setlength{\tabcolsep}{4pt}
\centering
\scalebox{0.80}{
\begin{tabular}{@{}lcccc@{}}
\toprule
\textbf{Task} & \textbf{Echo} & \textbf{Prompt-} & \textbf{TP w.} & \textbf{HTP} \\
& \textbf{Mean} & \textbf{EOL} & \textbf{PromptEOL} & \textbf{(Ours)} \\
\midrule
AskUbuntuDupQuestions     & 56.09 & 53.65 & 53.65 & 51.05 \\
MindSmallReranking        & 28.60 & 27.29 & 27.84 & 28.96 \\
StackOverflowDupQuestions & 45.23 & 40.11 & 40.21 & 42.53 \\
\midrule
\textbf{Average}          & \textbf{43.31} & 40.35 & 40.57 & \underline{40.85} \\
\bottomrule
\end{tabular}
}
\caption{Performance on reranking tasks (\%). We \textbf{bold} the top one and \underline{underline} the runner-up in the average row.} \label{tab:nlp_tasks_reranking}
\end{table}

\begin{table}[t]
\footnotesize
\setlength{\tabcolsep}{4pt}
\centering
\scalebox{0.8}{
\begin{tabular}{@{}lcccc@{}}
\toprule
\textbf{Task} & \textbf{Echo} & \textbf{Prompt-} & \textbf{TP w.} & \textbf{HTP} \\
& \textbf{Mean} & \textbf{EOL} & \textbf{PromptEOL} & \textbf{(Ours)} \\
\midrule
ArxivClusteringP2P        & 43.18 & 30.98 & 27.46 & 48.48 \\
ArxivClusteringS2S        & 37.13 & 25.01 & 24.70 & 34.50 \\
BiorxivClusteringP2P      & 31.93 & 17.91 & 17.53 & 37.74 \\
BiorxivClusteringS2S      & 25.28 & 14.12 & 13.86 & 26.23 \\
MedrxivClusteringP2P      & 27.10 & 17.50 & 17.04 & 30.36 \\
MedrxivClusteringS2S      & 23.86 & 17.93 & 17.28 & 25.65 \\
RedditClustering          & 36.05 & 16.55 & 15.48 & 26.78 \\
RedditClusteringP2P       & 56.10 & 34.41 & 33.92 & 59.20 \\
StackExchangeClustering   & 43.11 & 30.27 & 28.88 & 43.53 \\
StackExchangeClusteringP2P& 36.50 & 26.17 & 25.64 & 35.31 \\
TwentyNewsgroupsClustering& 21.60 & 23.29 & 21.44 & 21.46 \\
\midrule
\textbf{Average}          & \underline{34.71} & 23.10 & 22.11 & \textbf{35.39} \\
\bottomrule
\end{tabular}
}
\caption{Average performance on clustering tasks (\%). We \textbf{bold} the top one and \underline{underline} the runner-up in the average row.} \label{tab:nlp_tasks_cluster}
\end{table}

\begin{table}[t]
\footnotesize
\setlength{\tabcolsep}{4pt}
\centering
\scalebox{1.0}{
\begin{tabular}{@{}lcccc@{}}
\toprule
\textbf{Task} & \textbf{Echo} & \textbf{Prompt-} & \textbf{TP w.} & \textbf{HTP} \\
& \textbf{Mean} & \textbf{EOL} & \textbf{PromptEOL} & \textbf{(Ours)} \\
\midrule
STS16 & 76.19 & 70.08 & 70.13 & 57.84 \\
STS15 & 69.09 & 69.25 & 69.93 & 62.58 \\
STS14 & 58.27 & 62.39 & 59.40 & 48.28 \\
STS13 & 71.91 & 74.19 & 75.80 & 54.68 \\
STS12 & 46.85 & 63.56 & 65.22 & 44.80 \\
\midrule
\textbf{Average} & \underline{64.46} & 67.89 & \textbf{68.10} & 53.64 \\
\bottomrule
\end{tabular}
}
\caption{Performance on STS tasks (\%). We \textbf{bold} the top one and \underline{underline} the runner-up in the average row.} \label{tab:nlp_tasks_sts}
\end{table}

\subsection{Ablations}
\label{app:ablation}

\cref{tab:performance_comparison} shows the detailed NDCG@10 for ablation study of various $K$ sizes. Note that when we only insert one token at the end of the last sentence, HTP is equivalant to TP Mean. 

\begin{table}[t]
    \centering
    \scalebox{0.7}{
    \begin{tabular}{cccccc}
        \toprule
        \textbf{Methods} & $K$ & \textbf{NFCorpus} & \textbf{SciFact} & \textbf{SummFD} & \textbf{2WikiMQA} \\
        \midrule
        TP Mean& - & 8.15 & 42.71 & 60.11 & 23.22 \\
        \midrule
        \multirow{7}{*}{HTP} & 1 & 15.51 & 46.66 & 54.21 & 28.44 \\
        & 2 & 16.25 &	45.00 &	56.49 &	28.91 \\
        & 4 & 13.16 &	45.11& 56.25 &	28.79 \\
        & 8 & 12.24 &	44.93 &	56.73 & 28.71 \\
        & 12 & 12.31 &	44.88 &	56.57 & 29.55 \\
        & 20 & 12.27 &	44.85 &	56.61 &	29.31 \\
        & 32 & 12.27 &	44.85 &	56.23 &	29.24 \\
        & 64 & 11.80 &	41.36 &	56.74 &	28.90 \\
        \bottomrule
    \end{tabular}
    }
        \caption{Performance comparison of the TP Mean and HTP methods across different datasets and varying $K$ values.} \label{tab:performance_comparison}
\end{table}

\section{Examples}

\label{app:examples}
\begin{tcolorbox}[
    colback=blue!5!white, 
    colframe=blue!75!black, 
    title=\textbf{Example on ArguAna}, 
    fonttitle=\bfseries,
    fontupper=\small 
]
\texttt{<s>} Given a claim, retrieve documents that support or refute the claim Text: \texttt{<B-PST>}\texttt{<B-PST>}\texttt{<B-PST>}\texttt{<B-PST>}\texttt{<B-PST>}\texttt{<B-PST>} \newline \texttt{<B-PST>}\texttt{<B-PST>}\texttt{<B-PST>}\texttt{<PST>} Ending poverty through entrepreneurialism Introducing finance provides communities with access to startup capital. \texttt{<PST>} Access to financial capital is vital in several respects for initiating capitalism. \texttt{<PST>} Firstly, access to capital enables entrepreneurialism. \texttt{<PST>} The poor have business ideas that would benefit both themselves and their community they just require access to capital to invest in such ideas. \texttt{<PST>} The Initiative ‘Lend with Care’ is providing access to capital to empower entrepreneurs. \texttt{<PST>} [1] . \texttt{<PST>} Secondly, the cumulative effect of small-scale savings and borrowing, enabled through microfinance enables individuals, families and communities, to enter markets - of land and property. \texttt{<PST>} Being able to buy property and land can enable personal security, dignity, and increasing returns. \texttt{<PST>} [1] See further readings: Lend with Care, 2013. \texttt{</s>}

\end{tcolorbox}

\begin{table}[t]
\centering
    \scalebox{0.56}{
\begin{tabular}{@{}llcccc@{}}
\toprule
\textbf{Dataset} & \textbf{Domain} & \textbf{\# Queries} & \textbf{\# Docs} & \textbf{\makecell{Avg. Query \\ Words}} & \textbf{\makecell{Avg. Doc \\ Words}} \\
\midrule
NarrativeQA      & Literature, Film & 10,449 & 355 & 9   & 50,474 \\
QMSum           & Meeting          & 1,527  & 197 & 71  & 10,058 \\
2WikiMultihopQA  & Wikipedia        & 300    & 300 & 12  & 6,132 \\
SummScreenFD     & ScreenWriting    & 336    & 336 & 102 & 5,582 \\
\bottomrule
\end{tabular}
}
\caption{Datasets statistics for LongEmbed.} \label{tab:longembed-stats}

\end{table}

\end{document}